\title{A Recovery Guarantee\\for Sparse Neural Networks}
\author{Sara Fridovich-Keil \thanks{This work was initiated as a postdoc at Stanford, and completed as an assistant professor at Georgia Tech.} \\
School of Electrical and Computer Engineering\\
Georgia Institute of Technology\\
\texttt{sfk@gatech.edu} \\
\And
Mert Pilanci \\
Department of Electrical Engineering \\
Stanford University \\
\texttt{pilanci@stanford.edu} 
}
\begin{document}

\maketitle

\begin{abstract}
We prove the first guarantees of sparse recovery for ReLU neural networks, where the sparse network weights constitute the signal to be recovered. Specifically, we study structural properties of the sparse network weights for two-layer, scalar-output networks under which a simple iterative hard thresholding algorithm recovers these weights exactly, using memory that grows linearly in the number of nonzero weights. We validate this theoretical result with simple experiments on recovery of sparse planted MLPs, MNIST classification, and implicit neural representations. Experimentally, we find performance that is competitive with, and often exceeds, a high-performing but memory-inefficient baseline based on iterative magnitude pruning. Code is available at \url{https://github.com/voilalab/MLP-IHT}.
\end{abstract}

\section{Introduction}
\label{sec:intro}

Consider the task of training a sparse multilayer perceptron (MLP). We view this task through the lens of sparse signal recovery, in which the signal to be recovered is the vectorized MLP weights, most of which are zero --- so exact recovery requires finding the indices and values of the few nonzero MLP weights. Are these weights uniquely identifiable from training data? Can they be recovered efficiently in both memory and iteration complexity? For scalar-output, two-layer MLPs we answer both questions in the affirmative, proving what is to our knowledge the first recovery guarantee for sparse MLP weights.

Large neural networks are widely used as universal function approximators \citep{hornik1989multilayer}, but as model size grows networks require ever larger memory and compute time to train \citep{kaplan2020scaling}. Although large networks tend to be trainable to the highest quality, trained network weights are often highly compressible, e.g. by pruning, allowing for dramatic savings in memory and computation at inference time \citep{cheng2024survey}. While sparse and high-performing networks are known to exist, efficiently optimizing them is an open challenge. Existing approaches often compromise either memory efficiency---requiring memory to first train a dense network \citep{frankle2018lottery, saikumar2024drive, gharatappeh2025information}---or quality, failing to match the performance of dense counterparts \citep{frankle2021pruning, saikumar2024drive}. While some strategies can empirically balance efficiency and quality \citep{parger2022gradient, jin2016training, damadi2024learning}, all existing approaches to sparse network training are heuristic in nature and lack formal guarantees of weight recovery.

At the same time, the compressed sensing literature is rich with theoretically-justified algorithms to leverage sparsity in large-scale optimization tasks (see e.g., \citet{wright2022high} for an accessible overview). However, these results are typically designed for linear models and convex optimization, and do not directly apply to recovery of sparse MLP weights \citep{tropp2004greed, khanna2018iht, aghazadeh2018mission}.

Our work bridges this gap by leveraging the recent development of a convex reformulation of MLPs \citep{convexnn2layer, deepconvex}, which allows us to apply strong results from sparse signal estimation \citep{jain2014iterative} to the task of training a sparse MLP. In its convex reformulation, sparse MLP optimization can be viewed as a highly structured linear sensing problem in which the network weights are the signal to be recovered. We show that, when the training data consists of network evaluations at random Gaussian sample points, this highly structured sensing matrix satisfies (with high probability) the classic restricted strong convexity and restricted smoothness conditions that suffice to enable efficient sparse recovery via a simple projected gradient descent method known as Iterative Hard Thresholding (IHT).
Concretely, we make the following contributions:
\begin{itemize}
    \item We prove the first sparse recovery result applicable to ReLU MLPs, focusing on the case of a shallow scalar-output network and random Gaussian data. Our result includes both unique identifiability of sparse network weights as well as a high-probability guarantee of efficient recovery of these weights via IHT, building on a result from \citet{jain2014iterative}.
    \item We demonstrate in a suite of illustrative small-scale experiments that IHT indeed tends to outperform a strong but memory-inefficient baseline of iterative magnitude pruning (IMP) \citep{frankle2018lottery}, recovering higher-performing sparse networks while using less memory during optimization. Our experiments include both 2-layer and 3-layer MLPs with both scalar and vector valued outputs, extending beyond the regime of our theoretical results.
\end{itemize}

\section{Related work}
\label{sec:related}

\subsection{Sparse Neural Networks}
Prior work has shown that, in diverse contexts, a large neural network may be well approximated by a sparse subnetwork, for example with only 10\% of the original parameters left nonzero \citep{frankle2018lottery, nowak2023fantastic}. Sparse networks are far cheaper and faster to evaluate and store, making them attractive for applications on edge and resource-constrained platforms as well as for democratizing access to large foundation models. Moreover, in many cases a sparse subnetwork can even outperform the prediction accuracy \citep{frankle2018lottery} and out-of-distribution robustness \citep{diffenderfer2021winning, wu2024dynamic} of the original dense network.

However, sparse networks are notoriously difficult to optimize. Existing approaches to finding sparse networks fall into three categories: iterative pruning \citep{frankle2018lottery, liu2024survey}, pruning at initialization \citep{wang2021recent, frankle2021pruning}, and dynamic sparse training \citep{jin2016training, pmlr-v235-ji24a, nowak2023fantastic, damadi2024learning, kusupati2020soft}. Respectively, these approaches tend to be high-performing but require high memory during optimization, memory and computation efficient to optimize but with reduced final model performance, and efficient but heuristic to optimize to reasonable final performance. None of the existing sparse network optimization paradigms come with theoretical understanding or recovery guarantees.

Prior theoretical results for sparse neural networks are present in \citet{boursier2023penalising} and \citet{ergen2021convex} (see Lemma 10 therein), which derive conditions under which the sparsest two-layer MLP may be recovered by minimizing the Euclidean norm of the weights (i.e., applying weight decay). However, \citet{boursier2023penalising} focuses on univariate data and \citet{ergen2021convex} considers sparsity of the second (output) layer weights, whereas our analysis considers arbitrary data dimension with a focus on sparsity of the first (hidden) layer weights. The recovery result in \citet{ergen2021convex} also requires fewer data points than dimensions, while our result does not. Further, the conditions in \citet{ergen2021convex} are based on the KKT optimality conditions of a semi-infinite convex formulation \citep{hettich1993semi} and are not straightforward to verify, nor is a tractable recovery algorithm presented in \citet{ergen2021convex}. In contrast, our guarantee of sparse weight recovery relies on verifiable and satisfiable conditions that we show hold with high probability under random training data, and we prove that an iterative algorithm (iterative hard thresholding) achieves successful recovery of sparse neuron weights.

\subsection{Convex Neural Networks}
Recent work has revealed an equivalence between training shallow \citep{convexnn2layer} or deep \citep{deepconvex} neural networks and solving convex optimization problems defined by network architectures. The core idea involves enumerating or sampling neuron activation paths to form a fixed dictionary, whose coefficients are optimized via convex programming.

Specifically, a two-layer ReLU network approximates labels \( y \) using the nonconvex form \( y \approx \sum_{j=1}^p (Xu_j)_+v_j \), where $U=[u_1,...,u_p]$ and $v$ are the network weights and $X$ is the data matrix. Instead, the convex formulation uses activation patterns \(D_i = \text{Diag}(\mathbb{I}[Xu\ge0])\) enumerated over all $u$ to express the same network as 
\begin{equation}
\label{eq:convNN}
    y \approx \sum_{i=1}^P {D_i X (\tilde w_i - w_i)},
\end{equation}
subject to $(2D_i-I_n)X\tilde w_i \geq 0$ and $(2D_i-I_n)Xw_i \geq 0$ for all $i$. 
Optimal values of the nonconvex weights $U$ and $v$ can be recovered from optimal values of the convex optimization parameters $\tilde w$ and $w$.
Note that we use the term \emph{activation pattern} to refer to a binary pattern whose length matches the number of training examples, and whose values denote which training examples are attended to by a particular neuron (each neuron has its own activation pattern).
The total number of activation patterns \(P\) derived from all possible $u$ is bounded exponentially in the data rank \( r \), typically requiring subsampling for computational tractability. However, assuming sparsity in weights dramatically reduces the number of possible patterns, enabling exact convex optimization for large-scale datasets. \Cref{sec:preliminaries} describes how we adapt and specialize this convex MLP reformulation for sparse networks in our theory and experiments.

\subsection{Iterative Hard Thresholding (IHT)}

Iterative Hard Thresholding (IHT) is a special case of projected gradient descent, in which the projection is onto the nonconvex set of sparse vectors. For large-scale sparse recovery problems, IHT and additive algorithms such as basis pursuit and matching pursuit \citep{tropp2004greed} are often the only feasible algorithms, due to their memory efficiency compared to convex relaxations such as LASSO. IHT is also well-studied theoretically and comes with convergence guarantees both in its classic implementation \citep{blumensath2009iterative, blumensath2010normalized, jain2014iterative} and accelerated variants \citep{blumensath2012accelerated, khanna2018iht}. Some results also exist for a variant of IHT augmented with a count sketch data structure \citep{aghazadeh2018mission}, which can expand the regimes of sparsity under which IHT enjoys successful recovery. Of these theoretical results for sparse recovery by IHT, most require the measurement matrix to satisfy either the restricted isometry property (RIP) with a small enough RIP constant, or restricted strong convexity and restricted smoothness properties with a small enough condition number; these conditions are too strict for the sparse MLP weight recovery task we consider. 

However, \citet{jain2014iterative} proved a more general sparse recovery result for IHT, showing recovery under restricted strong convexity and restricted smoothness with an arbitrary finite condition number. \citet{jain2014iterative}'s result holds for classic IHT with the relaxation that the hard thresholding step of IHT must project onto a larger sparsity level than that of the true signal, where the inflation factor grows with condition number. Our theoretical results build on this result to show that the task of recovering sparse MLP weights can be reformulated so as to satisfy the restricted strong convexity and restricted smoothness properties in expectation over Gaussian data, allowing us to show that IHT is guaranteed to recover the weights of a planted sparse MLP.

\section{Preliminaries}
\label{sec:preliminaries}

Consider a ReLU neural network with vector-valued input, scalar output, and a single hidden layer. We use $\X \in \R^{\n \times \datadim}$ to denote the (Gaussian) data matrix with $\n$ data points and data (input) dimension $\datadim$. We denote the ground truth labels or values as $\measurement \in \R^\n$, and the neural network output as $\hat\measurement \in \R^\n$. The hidden weights of the 1-hidden-layer MLP are denoted $U \in \R^{\datadim \times \p}$ where $\p$ is the width of the hidden layer. The columns of this weight matrix are $u_i \in \R^\datadim, i=1,\dots,\p$, and the second layer weights are $v_1,...,v_p$.

We now describe convexifying the model by fusing the first and second layer weights of the non-convex ReLU model $\sum_{j=1}^\p (Xu_j)_+v_j$. We can express this model as follows:
\begin{equation}
\label{eq:nonconvexAnotation}
    \hat\measurement = \underbrace{\begin{bmatrix}
        \diag{(\mathbb{I}\{\X u_1\geq 0\})}\X & \dots & \diag{(\mathbb{I}\{\X u_\p\geq 0\})}\X
    \end{bmatrix}}_{\displaystyle\Large A \in \R^{\n \times \datadim\p}} \begin{bmatrix}
        u_1 v_1 \\
        \vdots \\
        u_\p v_\p
    \end{bmatrix}
\end{equation}
where we use $\mathbb{I}\{x\geq 0\}$ to denote the elementwise indicator function, taking value $1$ at indices where $x_i\geq 0$ and value $0$ otherwise.

Training (e.g. with MSE loss) the 2-layer MLP in \Cref{eq:nonconvexAnotation} presents a nonconvex optimization problem, because the parameters $u_j$ appear in both the weight vector and the $A$ matrix. We convexify by simply replacing the $\p$ weight vectors $ u_i$ in the $A$ matrix with $\p$ separate, fixed generator vectors $\h_i \in \R^\datadim$, and fusing the weights via $w_i=u_iv_i\,\forall i$. This parameterization was previously studied in \cite{mishkin2022fast}, where it was shown to yield the gated ReLU (GReLU) network class, which is equivalent in expressivity to standard ReLU networks. Here, we extend this approach and show that sparse ReLU networks can also be recovered using a similar strategy. We obtain
\begin{equation}
\label{eq:convexAhnotation}
    \hat\measurement = \underbrace{\begin{bmatrix}
        \diag{(\mathbb{I}\{\X\h_1 \geq 0\})}\X & \dots & \diag{(\mathbb{I}\{\X\h_\p \geq 0\})}\X
    \end{bmatrix}}_{\displaystyle\Large A} \begin{bmatrix}
        \w_1 \\
        \vdots \\
        \w_\p
    \end{bmatrix} ;
\end{equation}
in this formulation exact recovery amounts to finding the sparse vector $w^\star \in \R^\p$ whose values are the weights of a ground truth, planted MLP.
If we allow the effective hidden dimension $\p$ to be very large (up to $2\datadim\big(\frac{e(\n-1)}{\datadim}\big)^\datadim$ \citep{pilanci2020neural}), we can choose a set of vectors $\h_i$ such that $\{(\mathbb{I}\{\X\h_i\geq 0\})\}_{i=1}^\p$ is exactly the set of all possible distinct activation patterns achievable for dataset $\X$. 
Recall that in our notation, the term \emph{activation pattern} refers to a binary pattern whose length matches the number of training examples $\n$, and whose values denote which training examples are attended to by a particular neuron (each neuron has its own activation pattern).
Moreover, for sparse neural networks with at most $s'$ nonzero weights per hidden neuron, we have $p\le 2s'{\datadim \choose s'}{(\frac{n}{s'}})^{s'}$ by a counting argument; this may be far fewer total activation patterns than needed to model dense weights. 
Consider a neuron whose weight vector has at most 
$s'$ nonzero entries. First, the support of this weight vector must be selected, which corresponds to choosing $s'$ input dimensions out of $\datadim$, resulting in $\datadim \choose s'$ possible choices. For each choice of these $s'$ dimensions, the neuron computes a linear threshold function in an $s'$ dimensional subspace of $\R^\n$. A classical result in the theory of hyperplane arrangements \citep{stanley2007introduction} shows that such a linear threshold function can generate at most 
 $2\sum_{i=0}^{s'-1} {\n-1 \choose i} \leq 2 \big(\frac{n}{s'}\big)^{s'}$
 distinct activation patterns over $\n$
 data points. Multiplying the number of ways to select the support and the number of patterns per support, and incorporating a factor of $s'$
 for indexing neurons, we arrive at the stated bound: $p\le 2s'{\datadim \choose s'}{\big(\frac{n}{s'}}\big)^{s'}$.
With this large but fixed set of generator vectors $\h_i$, we can solve a similarly large but convex program to recover hidden weights $\w_i$ corresponding to the globally optimal 2-layer nonconvex MLP. 

Alternatively, we can operate with an arbitrary hidden dimension $\m$ and select the generator vectors $\h_i$ at random such that the activation patterns $\{(\mathbb{I}\{\X\h_i\geq 0\})\}_{i=1}^\m$ are a random subset (drawn without replacement) of all $\p$ possible activation patterns.
In our theoretical results (\Cref{sec:theory}) we assume patterns are enumerated; in our experiments (\Cref{sec:experiments}) we sample $\m \leq \p$ patterns using random generator vectors.

\section{Theoretical results}
\label{sec:theory}

Consider the sparse recovery problem defined by \Cref{eq:convexAhnotation} of the form $y=Aw^\star$ for some unknown vector $w^\star$, with sensing matrix 
$$
A := \begin{bmatrix}
    \diag{(\mathbb{I}\{\X\h_1 \geq 0\})}\X & \dots & \diag{(\mathbb{I}\{\X\h_\p \geq 0\})}\X \end{bmatrix} \in \R^{\n \times \datadim \p}.
$$
Our main result leverages connections between sparse recovery methods and convex formulations of ReLU networks.
For simplicity, we will assume that the data matrix $\X \in \R^{\n \times \datadim}$ has entries drawn i.i.d. $\mathcal{N}(0,1)$; a similar effect may be achieved in practice by data whitening. We also assume that the columns of $A$ are unit-normalized before optimization.

To recover the planted weights $\w^\star$, we consider the 
following simple variant of the 
classic Iterative Hard Thresholding (IHT) algorithm,
\begin{equation} \label{eq:iht}
    \w^{k+1} = H_{\tilde s}(\w^k - \eta A^T(A\w^k - \measurement)).
\end{equation}
Here $\eta > 0$ is a step size parameter and the hard thresholding operation
$H_{\tilde s}$
is a projection onto 
the set of $\tilde s$-sparse vectors, where $\tilde s > s$ following \citet{jain2014iterative}. 
In \Cref{lemma:rsc_rss} we show that $A$ satisfies restricted strong convexity and restricted smoothness with high probability over the random data $\X$, making the sparse MLP weights uniquely identifiable. In \Cref{thm:ihtrecovers_s_inflation} we show that IHT efficiently recovers these sparse MLP weights.

Suppose that $y=\sum_{i=1}^\p (Xu_i^\star)_+ v_i^\star = A\w^\star$ is the planted neural network model. Recall that the relation between the standard and fused form of the weights is $\w^\star=[u_1^\star v_1^\star,...,u_\p^\star v_\p^\star\ ]$ where $\mathrm{sign}(Xh_i)=\mathrm{sign}(Xu^\star_i)\,\forall i$ as defined in \eqref{eq:nonconvexAnotation} and \eqref{eq:convexAhnotation}. \Cref{assumption 1} gives conditions on a planted network under which we can ensure exact recovery of its weights.
\begin{assumption}[Properties of the planted sparse network]
\label{assumption 1}
Assume that either
\begin{itemize}
    \item[(a)] $u_i^\star\in \{-1,0,1\}^\datadim,$ $\|u_i^\star\|_0= k, v_i^\star\in \R\,\forall i\in[\p]$ and $k\p \le s$, or
    \item[(b)] $u_i^\star\in \R^\datadim,$ $\|u^\star_i\|_0  =s_i \in [s_\text{min}, k], v_i^\star\in \{-1,1\}\,\forall i\in[\p]$ and $\sum_{i=1}^\p s_i \le s$ holds.
\end{itemize}
\end{assumption}

    Both parts of \Cref{assumption 1} have to do with what values the planted MLP weights can take, and both parts restrict the number of nonzero hidden weights. \Cref{assumption 1}(a) requires that the nonzero hidden weights take binary values, but allows the output layer weights to take any real values. \Cref{assumption 1}(b) captures the more relaxed and common scenario in which the nonzero hidden weights can take any real values, but the output layer weights are restricted to $\pm 1$, since the flexibility to model any real value is already captured by the hidden layer weights.

In \Cref{sec:assumptionproof} we show that \Cref{assumption 2} follows from either of \Cref{assumption 1}(a) and 1(b) with high probability, and we give weight constructions that satisfy each option in \Cref{assumption 1}. We note that only \Cref{assumption 2} is used in our proof of convergence and sparse recovery; \Cref{assumption 1} is sufficient for \Cref{assumption 2} but may not be necessary. Likewise, we show that \Cref{assumption 2} is sufficient for sparse recovery but we do not prove that it is necessary.
\begin{assumption}[Properties of activation patterns] \label{assumptions}
\label{assumption 2}
    Let $D_i = \diag{(\mathbb{I}\{\X\h_i \geq 0\})} \in \R^{\n \times \n}$, with $\{D_i\}_{i=1}^\p$ as the set of all such distinct activation patterns possible with data $\X \in \R^{\n \times \datadim}$, whose entries are drawn i.i.d. $\sim \mathcal{N}(0,1)$. We assume the following properties about this set of enumerated activation patterns:
    \begin{enumerate}
        \item $\Tr{D_i} \geq \varepsilon \n$ for all $i \in [\p]$, for some $\varepsilon \in (0,1)$.
        \item For all $i \neq i'$, the diagonals of $D_i$ and $D_{i'}$ differ in at least $\gamma \n$ positions, for some $\gamma \in (0,1)$.
    \end{enumerate}
    In the appendix we prove that both of these hold with high probability under \Cref{assumption 1}. Specifically, \Cref{assumption 2}.1 holds with probability at least $1 - \p e^{-\n\big(\frac{1-\varepsilon}{128}-\mathcal{H}(\varepsilon) \big)}$, as long as $\n \geq 4k$. Here $\mathcal{H}$ denotes binary entropy.
    \Cref{assumption 2}.2 follows from \Cref{assumption 1}(a) with probability at least $1-2e^{-c\delta^2\n}$, as long as $\n \geq C\delta^{-6}w(K)^2$ and $k \leq \frac{0.69}{\pi(\gamma+\delta)}$. Here $c$ and $C$ are positive absolute constants, $\delta > 0$, and $w(K)$ is the normalized Gaussian mean width of a subset $K \subseteq \R^{d}$, where $K$ represents the set of (normalized) neuron weights that satisfy \Cref{assumption 1}(a).
    \Cref{assumption 2}.2 likewise follows from \Cref{assumption 1}(b) with probability at least $1-2e^{-c\delta^2\n}-\tilde\epsilon$ as long as $\n \geq C\delta^{-6}w(K)^2$, where now $K$ represents the set of (normalized) neuron weights that satisfy \Cref{assumption 1}(b). Note that $\tilde\epsilon$ and some additional restrictions on $s_{\min}$ and $k$ are described in the appendix proof.
\end{assumption}

\begin{remark}[Sample complexity]
\label{remark:ndependence}
    Note that \Cref{assumption 2} requires the number of training examples $\n \geq \max(4k, C\delta^{-6}w(K))$, where $k$ is the sparsity level of each neuron, $K$ is the set of (normalized) neuron weights that satisfy \Cref{assumption 1}(a) or 1(b), $w(K)$ is its normalized Gaussian mean width, and $C$ is a positive absolute constant. This is a modest requirement that grows with the number of active (nonzero) neuron weights rather than the total number of neuron weights, enabling compressive sensing of sparse neuron weights.
\end{remark}

    Below we show that \Cref{assumption 2} is sufficient to ensure recovery of sparse MLP weights, for a 2-layer scalar-output ReLU MLP. Intuitively, the first part of \Cref{assumption 2} requires that every neuron attends to at least an $\epsilon$ fraction of the training data, rather than fitting or overfitting to a tiny number of examples. Since our data covariates are assumed Gaussian, this first part of \Cref{assumption 2} enables a concentration argument. The second part of \Cref{assumption 2} requires that any two different neurons must attend to subsets of the training dataset that differ by at least a $\gamma$ fraction. Without this requirement, neurons might be very similar to each other and thus more difficult to distinguish and recover correctly during optimization. This second portion of \Cref{assumption 2} bears similarity in spirit with the incoherence property common in compressive sensing.

\begin{lemma}[Restricted strong convexity and restricted smoothness]\label{lemma:rsc_rss}
Let $A \in \R^{\n \times \datadim \p}$ be as defined in \Cref{eq:convexAhnotation}, with the modification that all columns are normalized to have unit $\ell_2$ norm. Assume that entries of the data matrix $\X \in \R^{\n \times \datadim}$ are drawn i.i.d. $\mathcal{N}(0,1)$ and \Cref{assumption 2} holds. 
    Consider an index set $S \subseteq [\datadim\p]$ with $|S| = s \leq \n$, and the induced $s \times s$ matrix $A_S^TA_S$. 
    For any $\delta \in (0, \frac{\varepsilon}{1-\gamma})$, with probability at least $1 - 2s(s-1)\exp{\big(\frac{-c\delta^2\varepsilon\sqrt{\n}}{1+\delta}\big)} - 2s(s-1)\exp{\big(\frac{-c\delta^2 \n^{3/4} \varepsilon}{\n^{1/4} +\delta  }\big)} - 8s(s-1)\exp{\big(\frac{-c\delta^2\varepsilon \n}{1+\delta}\big)}$,
    \begin{equation*}
        \alpha I_s \preceq A_S^TA_S \preceq \beta I_s
    \end{equation*}
     \begin{equation*}
        \mbox{with}~~ \alpha \geq 1 - \frac{1+\delta}{1-\delta}\sqrt{1 - \gamma} - \frac{s}{\n^{1/4}(1-\delta)}; ~~~~~\beta \leq 1 + \frac{1+\delta}{1-\delta}(s-1)\sqrt{1-\gamma} + \frac{s}{\n^{1/4}(1-\delta)} .
    \end{equation*}
    Here $\varepsilon$ and $\gamma$ are the same as in \Cref{assumptions}, and $c$ is a positive universal constant.
\end{lemma}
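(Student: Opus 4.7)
The plan is to bound the extreme eigenvalues of the normalized Gram matrix $G := A_S^T A_S$ by controlling its entries one at a time, and then combining these entrywise bounds via a Gershgorin-type argument. Since every column of $A$ is unit-normalized, $G$ has unit diagonal, so both $\alpha$ and $\beta$ are driven entirely by the off-diagonal entries $G_{(i,j),(i',j')} = \langle D_i X_{:,j}, D_{i'} X_{:,j'}\rangle/(\|D_i X_{:,j}\|\,\|D_{i'} X_{:,j'}\|)$.

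First, I would establish norm concentration for each denominator: conditional on the pattern $D_i$, $\|D_i X_{:,j}\|^2$ is a sum of $\Tr(D_i) \geq \varepsilon\n$ i.i.d.\ $\chi^2_1$ variables, so a Laurent--Massart / Bernstein bound gives $\|D_i X_{:,j}\|^2 \in [(1-\delta)\Tr(D_i),\,(1+\delta)\Tr(D_i)]$ with probability at least $1 - 2\exp(-c\delta^2\varepsilon\n/(1+\delta))$. This lets me replace every denominator by $\sqrt{\Tr(D_i)\Tr(D_{i'})}$ at the cost of a factor $(1+\delta)/(1-\delta)$. Next, I would control each numerator $\sum_k D_i[k,k]D_{i'}[k,k] X_{k,j} X_{k,j'}$ in two regimes: when $j \neq j'$, the summands are mean-zero products of independent Gaussians and hence sub-exponential, and a Bernstein-type inequality applied at two different deviation scales produces the $\exp(-c\delta^2\varepsilon\sqrt{\n})$ and $\exp(-c\delta^2\varepsilon\n^{3/4})$ failure probabilities in the lemma, giving an $O(s/\n^{1/4})$ contribution to both eigenvalue bounds; when $j = j'$ but $i \neq i'$, the numerator is a chi-squared sum with mean $\Tr(D_i D_{i'})$, so it concentrates to $\Tr(D_i D_{i'})(1 \pm \delta)$ by the same Laurent--Massart inequality.

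The main technical obstacle is the deterministic inequality $\Tr(D_i D_{i'})/\sqrt{\Tr(D_i)\Tr(D_{i'})} \leq \sqrt{1-\gamma}$ that must be extracted from Assumption 2. Using the identity $\Tr(D_i D_{i'}) = \tfrac12(\Tr(D_i)+\Tr(D_{i'}) - d_H(D_i,D_{i'}))$ together with $d_H \geq \gamma \n$ and $\varepsilon \n \leq \Tr(D_i),\Tr(D_{i'}) \leq \n$, a naive AM-GM estimate only gives $1 - \gamma/2$, so one must more carefully exploit the interaction between the Hamming separation and the lower trace bound $\varepsilon\n$; the restriction $\delta \in (0,\varepsilon/(1-\gamma))$ stated in the lemma suggests exactly this coupling. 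I expect this deterministic step to be the crux of the proof.

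Finally, I would assemble the pieces. The triangle inequality on each row of $G - I$ yields the Gershgorin-style upper bound $\beta \leq 1 + (s-1)\tfrac{1+\delta}{1-\delta}\sqrt{1-\gamma} + \tfrac{s}{\n^{1/4}(1-\delta)}$, while a sharper spectral argument---exploiting that the large same-$j$ cross-terms are concentrated on a structured low-rank subspace rather than summing as independent contributions across rows---should produce the $s$-free lower bound $\alpha \geq 1 - \tfrac{1+\delta}{1-\delta}\sqrt{1-\gamma} - \tfrac{s}{\n^{1/4}(1-\delta)}$ claimed in the lemma. A union bound over the $\binom{s}{2}$ off-diagonal pairs, with each of the three concentration events (norm concentration, same-$j$ cross-term, different-$j$ cross-term) contributing its own failure probability, then reproduces the stated tail.
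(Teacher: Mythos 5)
Your high-level plan (unit diagonal, entrywise concentration for the two kinds of off-diagonal entries, then a spectral assembly plus a union bound over pairs) is the same as the paper's, but the two steps you yourself flag as the crux are exactly the ones you leave unproven, and your guesses about how they get resolved point in the wrong direction. First, the deterministic bound $\Tr{(D_jD_{j'})}/\sqrt{\Tr{D_j}\Tr{D_{j'}}}\leq\sqrt{1-\gamma}$ does not come from coupling the Hamming separation with the trace lower bound $\varepsilon\n$, and the restriction $\delta\in(0,\varepsilon/(1-\gamma))$ has nothing to do with it. The paper's argument is a short extremal one: for fixed traces, the overlap $\Tr{(D_jD_{j'})}$ is maximized when the smaller pattern's support is nested inside the larger one's; in that configuration the $\gamma\n$ Hamming separation forces $\Tr{D_{j'}}\geq\Tr{D_j}+\gamma\n$, so with $\Tr{D_j}=\xi\n$ the ratio is at most $\sqrt{\xi/(\xi+\gamma)}$, which is increasing in $\xi$ and, since $\xi\leq 1-\gamma$, is at most $\sqrt{1-\gamma}$. (Your ``naive AM--GM gives $1-\gamma/2$'' worry dissolves once you also impose the inclusion--exclusion constraint $\Tr{(D_jD_{j'})}\geq\Tr{D_j}+\Tr{D_{j'}}-\n$.) The condition $\delta<\varepsilon/(1-\gamma)$ is used elsewhere: it lets the two concentration regimes for the different-column entries be merged into the single bound $\frac{1}{(1-\delta)\n^{1/4}}$, since in the large-overlap regime one gets $\frac{\delta(1-\gamma)}{(1-\delta)\varepsilon\n^{1/4}}$.

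Second, your ``sharper spectral argument on a structured low-rank subspace'' for the $s$-free term in $\alpha$ is not a proof and misses the structure the paper actually uses. The paper permutes columns so that all columns sharing a data column $x_j$ are adjacent, and decomposes $A_S^TA_S=B+C$, where $B$ is block diagonal and contains precisely the same-column/different-pattern correlations, and $C$ contains all different-column entries. Two structural facts then carry the lower bound: (i) the large $\sqrt{1-\gamma}$-sized entries never couple different data columns, so they are confined to the diagonal blocks of $B$; and (ii) these entries are nonnegative (since $x_i^TD_jD_{j'}x_i\geq 0$), which is what lets the paper argue $\lambda_{\min}$ of each unit-diagonal block is at least $1-\frac{(1+\delta)\sqrt{1-\gamma}}{1-\delta}$ without an $(s-1)$ factor, while $\lambda_{\max}$ picks up the $(s-1)$ factor; the small different-column entries are handled by the coarse bound $\opnorm{C}\leq\frac{s}{(1-\delta)\n^{1/4}}$ and Weyl's inequality. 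A plain Gershgorin bound, which is all your proposal concretely supplies, would give $\alpha\geq 1-(s-1)\frac{(1+\delta)\sqrt{1-\gamma}}{1-\delta}-\cdots$, which is vacuous already for small $s$ because the same-column entries are order one. (A minor bookkeeping slip: the $\exp(-c\delta^2\varepsilon\sqrt{\n}/(1+\delta))$ tail comes from the same-column/different-pattern entries, not from the different-column entries as you state; only the $\n^{3/4}$ tail comes from the latter.)
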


\Cref{lemma:rsc_rss} ensures that the condition number of $A$, restricted to any set of $s\leq n$ columns, is finite and bounded above by $\sqrt{\beta/\alpha}$. The condition number shrinks as $\gamma$ grows, because this enforces greater separation (incoherence) between columns of $A$. The conditioning worsens with increasing $s$, as this increases the number of columns in $A_S$ and thus the potential for a coherent pair of columns.
\Cref{thm:ihtrecovers_s_inflation} ensures that IHT recovers planted sparse weights regardless of this condition number (as long as it is finite), though the rate of convergence slows with increasing condition number.

\begin{theorem}[IHT recovers sparse MLP weights] \label{thm:ihtrecovers_s_inflation}
    Suppose that \Cref{assumption 2} holds, the data matrix $\X \in \R^{\n \times \datadim}$ has entries drawn i.i.d. $\mathcal{N}(0,1)$, the activation patterns $D_i = \diag{(\mathbb{I}\{\X\h_i \geq 0\})}$ in the sensing matrix $A$ are enumerated to include all unique patterns that can result from $\zeronorm{\h} \leq s$, the columns of $A$ are pre-normalized in $\ell_2$ norm, and the planted neural network weights satisfy $\zeronorm{\w^\star} \leq s$.
    Consider the following variant of Iterative Hard Thresholding (IHT) to minimize the MSE objective $f(\w^k) = \frac{1}{2}\twonorm{A\w^k - \measurement}^2$:
    \begin{equation}
    \w^{k+1} = H_{\tilde s}\big(\w^k - \eta A^T(A\w^k - \measurement)\big) ,
    \end{equation}
    where $\tilde s \geq 32\big(\frac{\beta}{\alpha}\big)^2s$, $\eta = \frac{2}{3\beta}$, and $\alpha, \beta$ are the restricted strong convexity and restricted smoothness constants from \Cref{lemma:rsc_rss} corresponding to sparsity level $2\tilde s + s$. With the same high probability as in \Cref{lemma:rsc_rss}, after $K = \mathcal{O}\big(\frac{\beta}{\alpha}\log\big(\frac{f(\w^0)}{\epsilon}\big)\big)$ steps, IHT finds sparse weights $\w^K$ such that
    \begin{equation*}
        f(\w^K) - f(\w^\star) \leq \epsilon \quad \mbox{and} \quad \|\w^K - \w^\star\|_2^2\le 2\alpha^{-1} \epsilon\,.
    \end{equation*}
\end{theorem}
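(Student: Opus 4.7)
The strategy is to cast \Cref{thm:ihtrecovers_s_inflation} as a direct consequence of \Cref{lemma:rsc_rss} combined with the sparsity-inflated IHT convergence result of \citet{jain2014iterative}. The plan has three ingredients: (i) verify that $A$ satisfies RSC/RSS at the \emph{inflated} sparsity level $2\tilde s + s$ dictated by Jain et al.'s analysis; (ii) lift this to RSC/RSS of the quadratic loss $f$; (iii) apply the Jain--Tewari--Kar theorem and convert the optimality-gap bound into a parameter-error bound.

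First, I would invoke \Cref{lemma:rsc_rss} with its $s$ set to $2\tilde s + s$. Under \Cref{assumption 2} and Gaussianity of $X$, the lemma produces (with the stated high probability) constants $\alpha, \beta > 0$ such that $\alpha I \preceq A_S^T A_S \preceq \beta I$ for every index set $S \subseteq [dp]$ of size at most $2\tilde s + s$. Next I would lift this to RSC/RSS of $f(w) = \tfrac12\|Aw-y\|_2^2$: since $\nabla^2 f = A^T A$, for any $w_1, w_2$ whose combined support has size at most $2\tilde s + s$, the difference $w_1 - w_2$ lies in a coordinate subspace of dimension $\le 2\tilde s + s$, giving
\begin{equation*}
\tfrac{\alpha}{2}\|w_1-w_2\|_2^2 \;\le\; f(w_1) - f(w_2) - \langle \nabla f(w_2), w_1 - w_2\rangle \;\le\; \tfrac{\beta}{2}\|w_1-w_2\|_2^2.
\end{equation*}

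Second, I would apply the main result of \citet{jain2014iterative} for projected-gradient descent onto the set of $\tilde s$-sparse vectors, whose hypotheses match the present setup verbatim: an $s$-sparse signal $w^\star$, a quadratic objective satisfying $(\alpha,\beta)$-RSC/RSS on vectors whose combined support has size at most $2\tilde s + s$, inflation factor $\tilde s \ge 32(\beta/\alpha)^2 s$, and step size $\eta = 2/(3\beta)$. Their theorem yields geometric decrease $f(w^{k+1}) - f(w^\star) \le (1 - c\alpha/\beta)\bigl(f(w^k) - f(w^\star)\bigr)$ for an absolute constant $c>0$, so after $K = \mathcal{O}\bigl((\beta/\alpha)\log(f(w^0)/\epsilon)\bigr)$ iterations we have $f(w^K) - f(w^\star) \le \epsilon$. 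Finally, the parameter-error bound follows from RSC evaluated at $w_2 = w^\star$: since $y = Aw^\star$, we have $\nabla f(w^\star) = 0$ and $f(w^\star) = 0$, while $w^K - w^\star$ has support of size at most $\tilde s + s \le 2\tilde s + s$, so RSC gives
\begin{equation*}
\tfrac{\alpha}{2}\|w^K - w^\star\|_2^2 \;\le\; f(w^K) - f(w^\star) \;\le\; \epsilon,
\end{equation*}
which rearranges to $\|w^K - w^\star\|_2^2 \le 2\alpha^{-1}\epsilon$.

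\textbf{Main obstacle.} The conceptual content sits inside the citation to \citet{jain2014iterative}; the real difficulty is bookkeeping. The subtle point is that the condition number $\beta/\alpha$ appearing in the inflation factor $\tilde s \ge 32(\beta/\alpha)^2 s$ is itself determined by \Cref{lemma:rsc_rss} evaluated at the inflated sparsity $2\tilde s + s$, so one must check consistency: the sample size $n$ must be large enough that the lower bound $\alpha \ge 1 - \tfrac{1+\delta}{1-\delta}\sqrt{1-\gamma} - \tfrac{2\tilde s + s}{n^{1/4}(1-\delta)}$ remains strictly positive after the substitution $s \mapsto 2\tilde s + s$, and the failure probabilities from \Cref{lemma:rsc_rss} (which already encode a union bound over supports) must absorb the inflated sparsity level. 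Once $\tilde s$ and $n$ are chosen consistently so that both \Cref{lemma:rsc_rss} and the hypothesis $\tilde s \ge 32(\beta/\alpha)^2 s$ hold simultaneously, the conclusion follows by the chain above.
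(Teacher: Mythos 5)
Your proposal is correct and follows essentially the same route as the paper's own proof: invoke \Cref{lemma:rsc_rss} at sparsity level $2\tilde s + s$, feed the resulting RSC/RSS constants into the sparsity-inflated IHT theorem of \citet{jain2014iterative} to get $f(\w^K)-f(\w^\star)\le\epsilon$, and then use restricted strong convexity together with $\nabla f(\w^\star)=0$ to convert the objective gap into the bound $\|\w^K-\w^\star\|_2^2\le 2\alpha^{-1}\epsilon$. Your added remarks (the explicit lift from $\alpha I \preceq A_S^TA_S \preceq \beta I$ to RSC/RSS of $f$, the support-size check $\tilde s + s \le 2\tilde s + s$, and the self-consistency of $\tilde s$ with $\alpha,\beta$ evaluated at the inflated level) are sound refinements of the same argument rather than a different approach.
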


\begin{remark}
    \Cref{thm:ihtrecovers_s_inflation} is, to our knowledge, the first sparse recovery result that applies to sparse neural network weights. It extends \Cref{lemma:rsc_rss} to show that sparse MLP weights are not only uniquely identifiable with high probability from a network's behavior on random data, but that these sparse weights may be recovered efficiently by IHT with high probability. 
    If the underlying function mapping data points to values is indeed a planted sparse MLP, recovery of the weights of this sparse MLP also guarantees generalization, in the sense that the labels of fresh data following the same function will be perfectly predicted by the sparse MLP recovered by IHT.
\end{remark}
Proofs of all theoretical results may be found in the appendix.

\section{Experimental results}
\label{sec:experiments}

\newcommand{\plotnotitle}[1]{%
\adjincludegraphics[trim={{0.2\width} {0.0\height} {0.05\width} {0.0\height}}, clip, width=\linewidth]{#1}%
}

\newcommand{\plotwidenotitle}[1]{%
\adjincludegraphics[trim={{0.14\width} {0.0\height} {0.05\width} {0.0\height}}, clip, width=\linewidth]{#1}%
}

Our experiments compare the performance of IHT and a strong MLP-pruning baseline method, iterative magnitude pruning (IMP), the algorithm from the Lottery Ticket Hypothesis \citep{frankle2018lottery}, at training sparse MLPs. 
While these experiments are intended to complement and validate our sparse recovery theoretical results, they also extend beyond the setting of \Cref{thm:ihtrecovers_s_inflation} in several respects, to demonstrate that IHT empirically recovers high-performing sparse MLPs even under more flexible settings than those for which we can prove sparse recovery succeeds. 

Specifically, our range of experiments for IHT include (i) both full-batch (deterministic) and minibatch (stochastic) gradients, (ii) both scalar and vector-valued MLP outputs, (iii) both single-hidden-layer and deeper MLPs, (iv) both vanilla and accelerated IHT, (v) randomized (rather than enumerated) initialization for the sensing matrix $A$, for computational efficiency, and (vi) sequential convex updates to $A$ during IHT, rather than keeping $A$ fixed as we do in our theoretical analysis. These sequential convex updates interpolate between the fully convex formulation in our theory and the nonconvex training that is standard practice for MLPs, enabling empirically strong performance for IHT even with a much smaller, randomly-initialized $A$ compared to what is required in \Cref{thm:ihtrecovers_s_inflation}. 
The extension of IHT to vector-output MLPs and deeper MLPs is enabled by employing a count-sketch datastructure (following \citet{aghazadeh2018mission}) for noisy but memory-efficient estimation of all weights, to slightly relax the hard thresholding in vanilla IHT (which we do use for shallow, scalar-output MLPs closer to the setting of our theoretical guarantees).
We also strengthen the IMP baseline by pruning only 10\% of the weights in each iteration (rather than the default 20\%), which allows IMP to spend extra time finding a higher-performing sparse network.
The details of our experimental settings for both IHT and IMP are provided in \Cref{sec:experimental_methods} and in our open-sourced code. We also provide some ablations and variability experiments in \Cref{sec:more_experiments}.

We present experimental results on three illustrative tasks: fitting a planted sparse MLP, classifying handwritten MNIST digits \citep{deng2012mnist}, and fitting an implicit neural representation to MNIST and CIFAR-10 images \citep{cifar10}. In each task, we compare the performance of IHT (ours) and IMP \citep{frankle2018lottery}, implemented as described in \Cref{sec:experimental_methods}. 
For all figures, we show heatmaps comparing model performance as a function of the hidden dimension $\m$ (vertical axis) and sparsity level $s$ (horizontal axis).
We emphasize that IMP requires first training a dense MLP and then iteratively pruning it to achieve sparse weights, whereas IHT optimizes sparse weights directly and thus has far smaller memory requirements during training. 

Results on scalar-output and vector-output planted sparse MLPs are presented in \Cref{fig:planted_scalaroutput} and \Cref{fig:planted_vectoroutput}, respectively.
Within \Cref{fig:planted_scalaroutput} and \Cref{fig:planted_vectoroutput}, the left two subfigures compare IHT and IMP on 2-layer (1-hidden-layer) sparse MLPs while the right two subfigures compare IHT and IMP on 3-layer (2-hidden-layer) sparse MLPs.
For the planted MLP fitting tasks, we optimize a sparse MLP with hidden dimension $\m$ (heatmap vertical axis) and a budget of $s$ nonzero weights (heatmap horizontal axis) to match the input-output behavior of an unknown planted model of the same architecture and sparsity. Specifically, we draw random sparse weights and use these to generate a dataset $(\X \in \R^{50000 \times 100}, \measurement \in \R^{50000, \outputdim})$, for $\outputdim \in \{1, 10\}$. For planted MLP fitting tasks, we report peak signal to noise ratio (PSNR) at fitting this input-output behavior of the planted model. PSNR is defined as $\text{PSNR}=10\log_{10}(I^2_\text{max}/\text{MSE})$, where $I_\text{max}$ is the largest magnitude value in the ground truth signal and MSE is the mean squared error; higher PSNR is better.
If a certain setting of $\m$ and $s$ yields a planted model whose outputs $\measurement$ are all zero, we skip evaluation and report a PSNR of zero.

\begin{figure}[ht]
    \centering
    \begin{minipage}{0.24\textwidth}
    \centering
    \plotwidenotitle{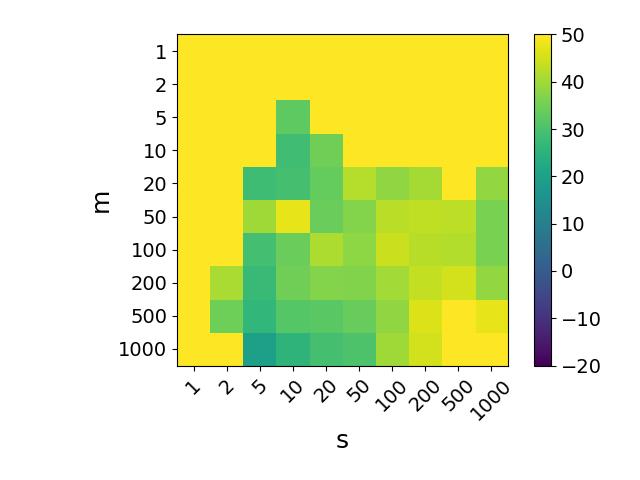}
    IHT
    \end{minipage}
    \begin{minipage}{0.24\textwidth}
    \centering
        \plotwidenotitle{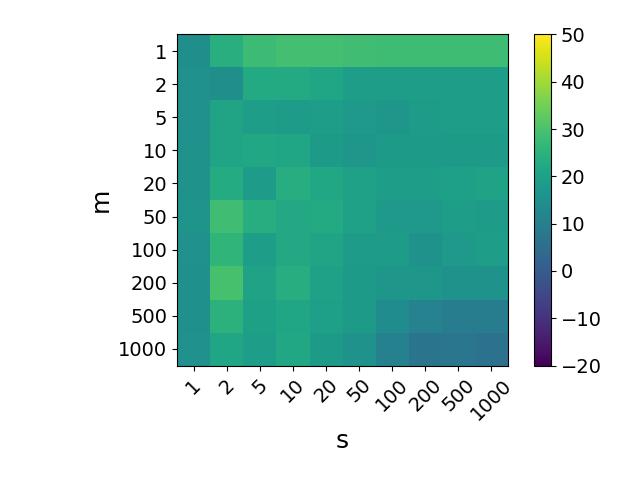}
        IMP
    \end{minipage}
    \centering
    \begin{minipage}{0.24\textwidth}
    \centering
    \plotnotitle{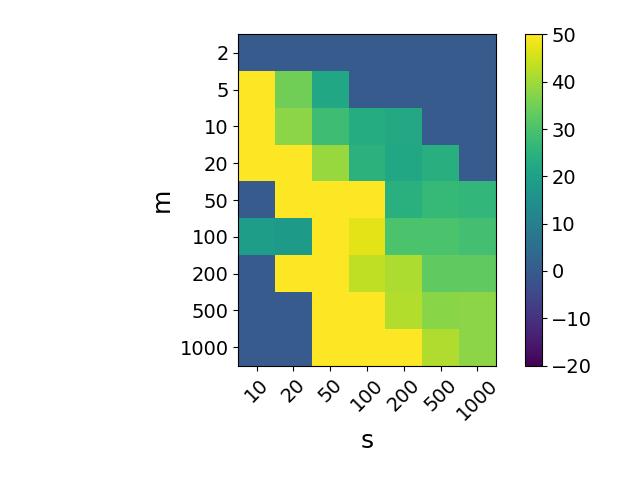}
    IHT
    \end{minipage}
    \begin{minipage}{0.24\textwidth}
    \centering
        \plotnotitle{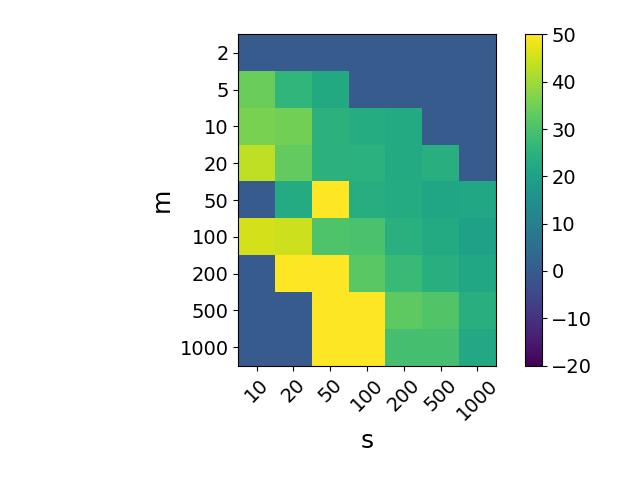}
        IMP
    \end{minipage}
    \caption{Average PSNR for fitting a planted one-hidden-layer (left) and two-hidden-layer (right) sparse scalar-output MLP of hidden dimension $m$ (vertical axis) and at most $s$ nonzero parameters (horizontal axis). Colorbar shows average PSNR over 3 random trials. IHT exhibits more robust performance than a strong but memory-inefficient iterative magnitude pruning (IMP) baseline \citep{frankle2018lottery}.}
    \label{fig:planted_scalaroutput}
\end{figure}

\begin{figure}[ht]
    \centering
    \begin{minipage}{0.24\textwidth}
    \centering
    \plotwidenotitle{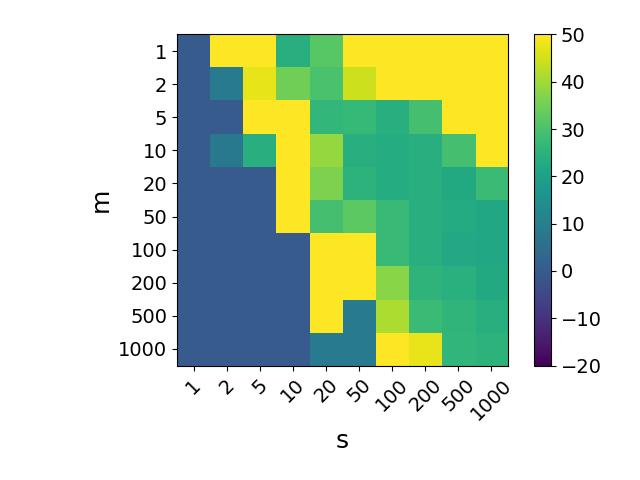}
    IHT
    \end{minipage}
    \begin{minipage}{0.24\textwidth}
    \centering
        \plotwidenotitle{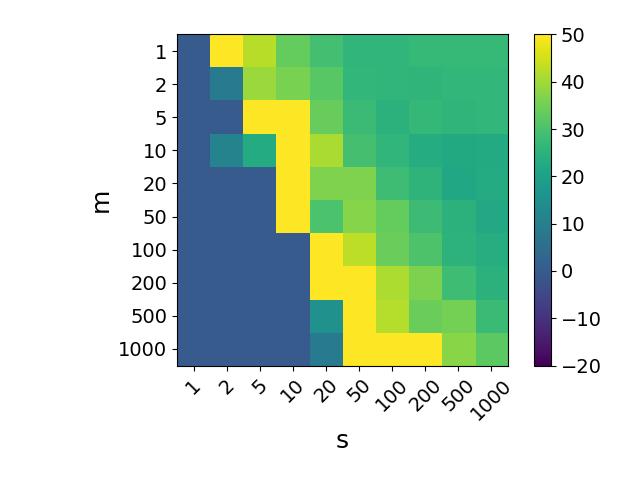}
        IMP
    \end{minipage}
    \centering
    \begin{minipage}{0.24\textwidth}
    \centering
    \plotnotitle{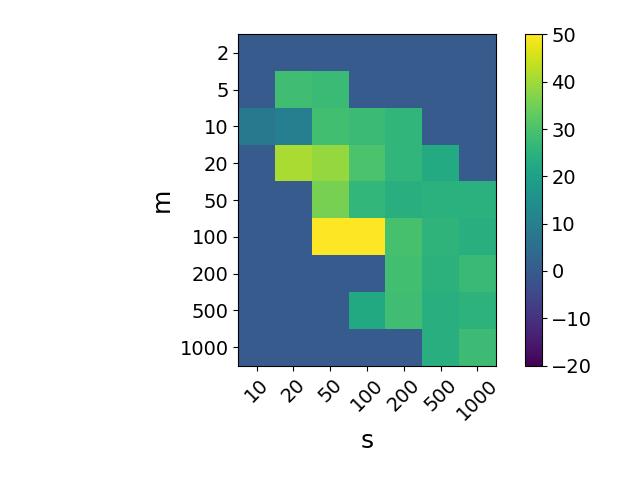}
    IHT
    \end{minipage}
    \begin{minipage}{0.24\textwidth}
    \centering
        \plotnotitle{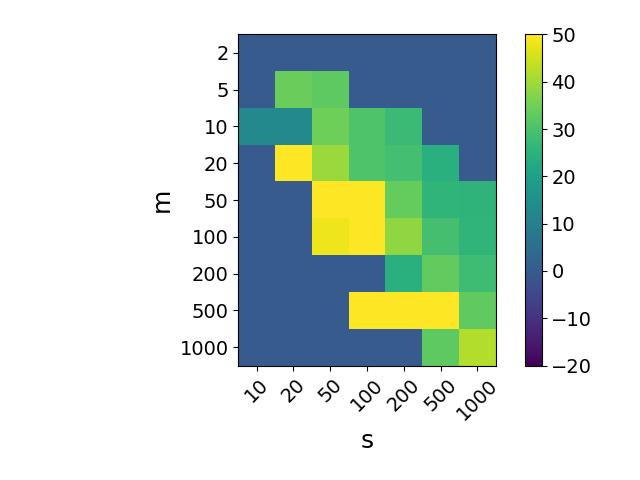}
        IMP
    \end{minipage}
    \caption{Average PSNR for fitting a planted one-hidden-layer (left) and two-hidden-layer (right) sparse vector-output (10-dimensional output) MLP of hidden dimension $m$ (vertical axis) and at most $s$ nonzero parameters (horizontal axis). Colorbar shows average PSNR over 3 random trials. IHT is competitive with a strong but memory-inefficient iterative magnitude pruning (IMP) baseline \citep{frankle2018lottery}.}
    \label{fig:planted_vectoroutput}
\end{figure}

\Cref{fig:mnist} presents results on MNIST digit classification. In \Cref{fig:mnist}, all MLPs have one hidden layer; the left two subfigures compare IHT and IMP on binary classification and the right two subfigures compare IHT and IMP on 10-way classification.
We consider both binary classification (digit 0 vs. 1) posed as a regression problem with MSE loss, as well as 10-way classification of all digits using cross-entropy loss and one-hot labels $\measurement \in \R^{10}$. For MNIST classification, we report classification accuracy, the fraction of test digits correctly classified.

\begin{figure}[ht]
    \centering
    \begin{minipage}{0.24\textwidth}
    \centering
    \plotnotitle{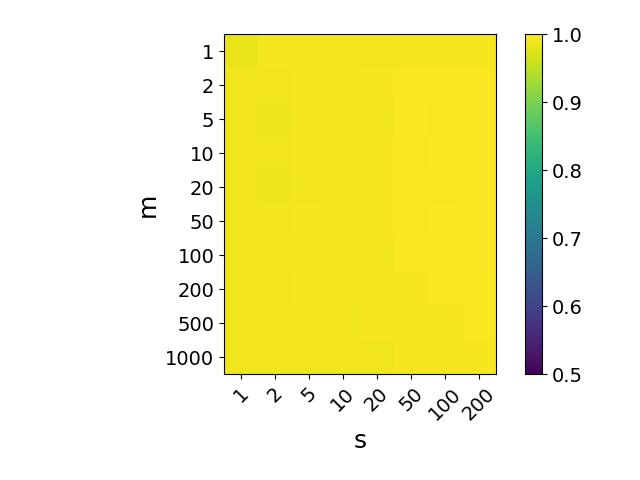}
    IHT
    \end{minipage}
    \begin{minipage}{0.24\textwidth}
    \centering
        \plotnotitle{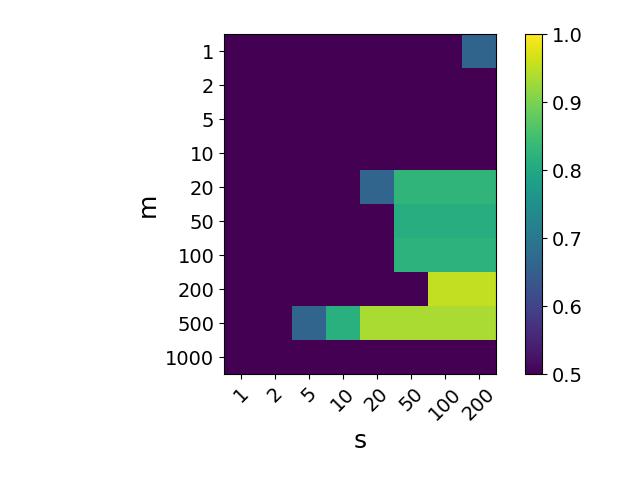}
        IMP
    \end{minipage}
    \centering
    \begin{minipage}{0.24\textwidth}
    \centering
    \plotnotitle{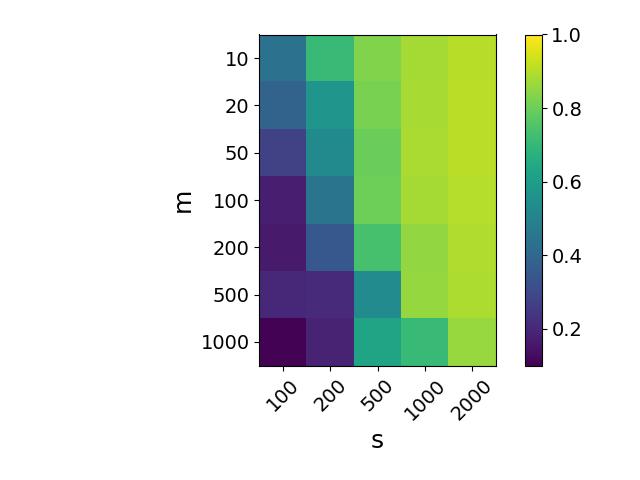}
    IHT
    \end{minipage}
    \begin{minipage}{0.24\textwidth}
    \centering
        \plotnotitle{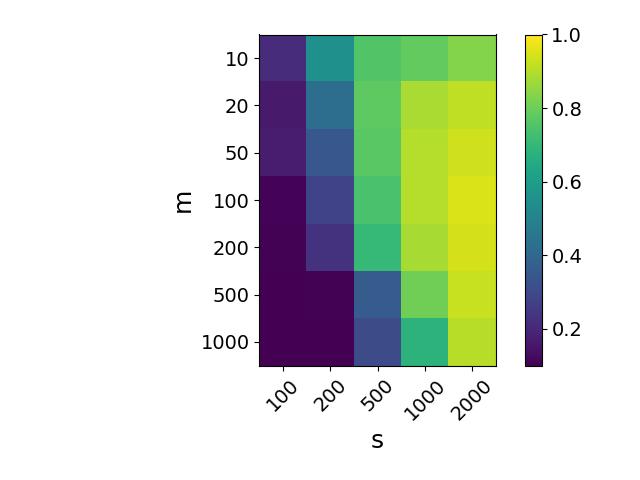}
        IMP
    \end{minipage}
    \caption{Average binary (left) and 10-class (right) classification accuracy for handwritten MNIST digits with a 2-layer (one-hidden-layer) MLP of hidden dimension $m$ (vertical axis) and at most $s$ nonzero parameters (horizontal axis). Colorbar shows average classification accuracy over 3 random trials. IHT exhibits more robust performance than a strong but memory-inefficient iterative magnitude pruning (IMP) baseline \citep{frankle2018lottery}.}
    \label{fig:mnist}
\end{figure}

Although runtime is not a key component of our analysis or experiments, one trend worth noting is that the runtime for IHT is increasing in $s$, whereas the opposite is true for IMP. This is because IMP requires iterative retraining with gradual pruning, so more steps of retraining are required to reach a sparser network (with smaller $s$). In other words, IHT is fastest exactly where IMP is slowest. Runtime varies for both IHT and IMP as a function of problem parameters, so we provide a few illustrative examples, all evaluated on an NVIDIA A6000 GPU. 

For binary MNIST classification using the smallest scalar-output model with $\m=1$ (hidden layer has a single neuron) and sparsity $s=1$ (meaning that neuron can attend to a single pixel only), and 15 full-batch gradient steps, IHT reaches 98.85\% test accuracy in 1.2 seconds, while IMP reaches 50\% test accuracy (random chance) in 27.78 seconds. With $\m=10$ and $s=100$, IHT reaches 99.2\% accuracy in 1.66 seconds; IMP reaches 50.15\% in 20.56 seconds. With $\m=100$ and $s=1000$, IHT retains 99.2\% accuracy in 8.4 seconds, and IMP achieves 77.66\% accuracy in 20.91 seconds. For small, scalar-output MLPs IHT is dominant in memory, runtime, and accuracy. 

However, for vector-output MLPs, deeper MLPs, and settings with large $s$, IHT (in its current implementation) can run more slowly than IMP, especially when using minibatch gradient updates. For full (10-class) MNIST classification, with 50 epochs, batch size 5000, $\m=10$, and $s=1000$, IHT gets 88.73\% test accuracy in 219.2 seconds, whereas IMP gets 77.66\% accuracy in 68.98 seconds. For fitting a planted MLP with 10-dimensional output, with 15 epochs, full-batch gradients, $\m=10$, and $s=10$, IHT reaches 48.67dB PSNR in 3.02 seconds while IMP reaches 24.81db PSNR in 28.46 seconds.

\newcommand{\plotwidewidenotitle}[1]{%
\adjincludegraphics[trim={{0.05\width} {0\height} {0.05\width} {0.05\height}}, clip, width=\linewidth]{#1}%
}

\begin{figure}[ht]
    \centering
    \begin{minipage}{0.24\textwidth}
    \centering
    \plotwidewidenotitle{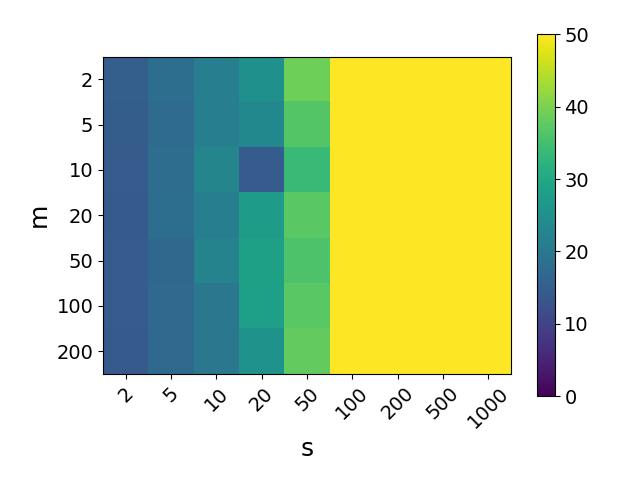}
    IHT
    \end{minipage}
    \begin{minipage}{0.24\textwidth}
    \centering
        \plotwidewidenotitle{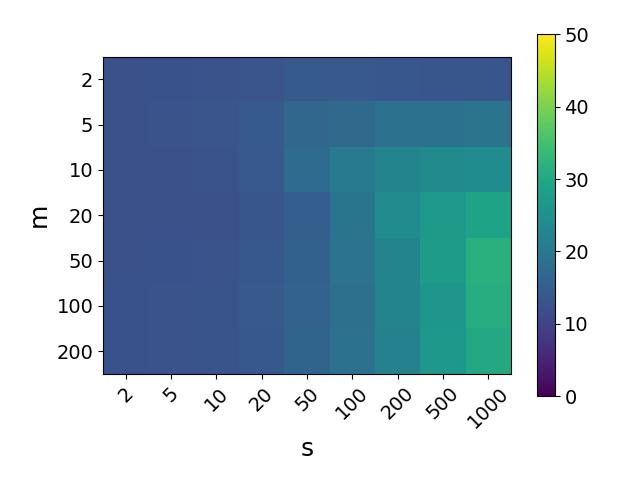}
        IMP
    \end{minipage}
    \centering
    \begin{minipage}{0.24\textwidth}
    \centering
    \plotwidewidenotitle{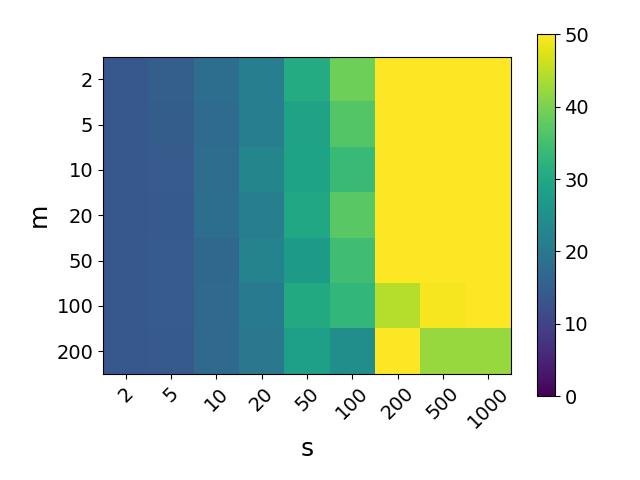}
    IHT
    \end{minipage}
    \begin{minipage}{0.24\textwidth}
    \centering
        \plotwidewidenotitle{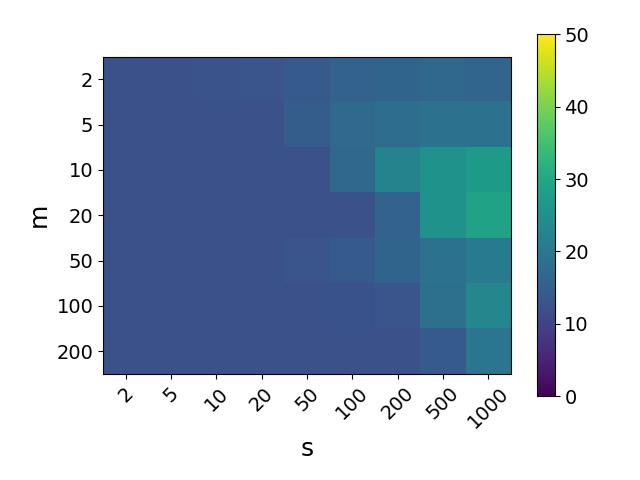}
        IMP
    \end{minipage}
    \caption{Average 1-hidden-layer (left) and 2-hidden-layer (right) PSNR for overfitting an MNIST digit image with an MLP-based implicit neural representation \citep{tancik2020fourfeat} of hidden dimension $m$ (vertical axis) and at most $s$ nonzero parameters (horizontal axis). Colorbar shows average PSNR over 3 random trials. IHT exhibits more robust performance than a strong but memory-inefficient iterative magnitude pruning (IMP) baseline \citep{frankle2018lottery}. We highlight that IHT exhibits stable recovery independent of $m$, in line with our theoretical results (see \Cref{remark:ndependence}). In contrast, IMP shows improved recovery with increasing $m$, likely because IMP here is solving a nonconvex optimization problem whose landscape is made more benign by increasing $m$.}
    \label{fig:inr_mnist}
\end{figure}

\begin{figure}[ht]
    \centering
    \begin{minipage}{0.24\textwidth}
    \centering
    \plotwidewidenotitle{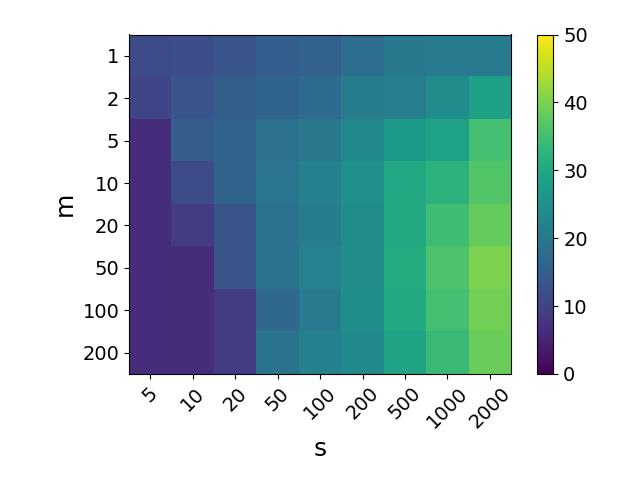}
    IHT
    \end{minipage}
    \begin{minipage}{0.24\textwidth}
    \centering
        \plotwidewidenotitle{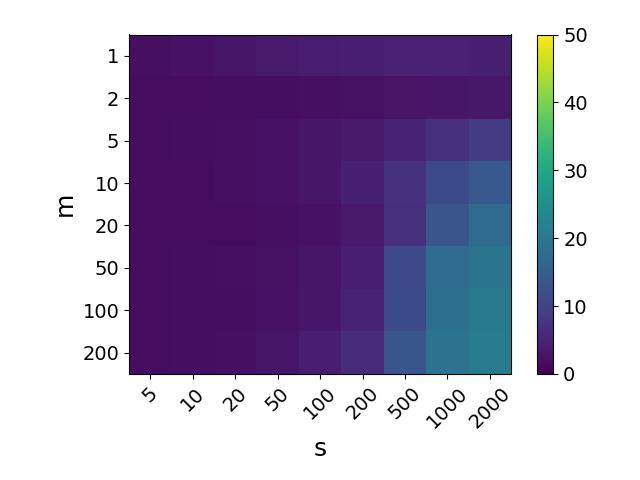}
        IMP
    \end{minipage}
    \centering
    \begin{minipage}{0.24\textwidth}
    \centering
    \plotwidewidenotitle{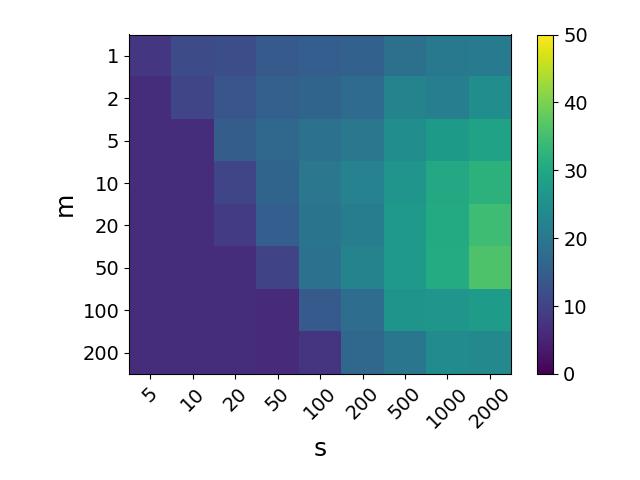}
    IHT
    \end{minipage}
    \begin{minipage}{0.24\textwidth}
    \centering
        \plotwidewidenotitle{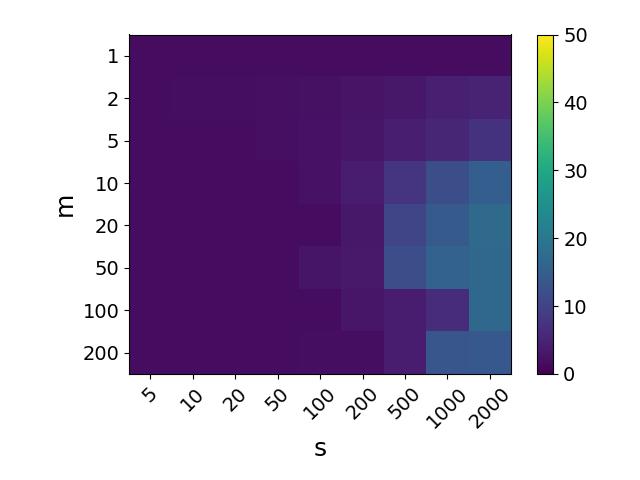}
        IMP
    \end{minipage}
    \caption{Average 1-hidden-layer (left) and 2-hidden-layer (right) PSNR for overfitting a CIFAR-10 digit image with an MLP-based implicit neural representation \citep{tancik2020fourfeat} of hidden dimension $m$ (vertical axis) and at most $s$ nonzero parameters (horizontal axis). Colorbar shows average PSNR over 3 random trials. IHT exhibits more robust performance than a strong but memory-inefficient iterative magnitude pruning (IMP) baseline \citep{frankle2018lottery}.}
    \label{fig:inr_cifar}
\end{figure}

\Cref{fig:inr_mnist} and \Cref{fig:inr_cifar} present results on fitting MNIST and CIFAR-10 images, respectively, with an MLP-based implicit neural representation. Specifically, we use a fixed Fourier feature embedding with Gaussian-distributed frequencies followed by a ReLU MLP, following \citet{tancik2020fourfeat}. We use the same Fourier features for embedding pixel coordinates for both IMP and IHT, and vary the optimization strategy for fitting the sparse MLP weights. Within \Cref{fig:inr_mnist} and \Cref{fig:inr_cifar}, the left two subfigures compare IHT and IMP on 2-layer (1-hidden-layer) sparse MLPs while the right two subfigures compare IHT and IMP on 3-layer (2-hidden-layer) sparse MLPs.

Across these experimental settings, we find that IHT almost always finds higher-performing sparse network weights compared to IMP, a strong baseline for pruning nonconvex MLPs \citep{frankle2021pruning}. Moreover, IHT uses a fixed parameter budget that scales with the sparsity level $s$ throughout optimization, whereas IMP requires initial training of a dense network whose parameter count grows with data dimension $\datadim$ and hidden dimension $\m$.

\section{Discussion}
\label{sec:discussion}

This work presents, to our knowledge, the first sparse recovery result applicable to the weights of a ReLU MLP. For nonnegative scalar-output MLPs, we show that sparse weights are uniquely identifiable and efficiently recoverable from measurements on random Gaussian data, with high probability. We complement this theoretical result with an empirical demonstration that a simple iterative hard thresholding algorithm can find sparser and higher-performing network weights compared to a strong network pruning baseline, while using far less memory during training. 

\paragraph{Limitations and future work.}
Our results are subject to several limitations that we expect future work may address. Our theoretical results are restricted to shallow, scalar-output MLPs, and are shown to hold with high probability over Gaussian data rather than more general data distributions. As these are the first recovery results for sparse MLPs, we are optimistic that future work may extend our results to deeper, vector-output networks with more diverse architectures and data distributions. 
Our experiments also suggest that sequential convex optimization from random initialization can find high-performing sparse MLPs; extending our convex-formulation recovery result to sequential convex programming is of interest.
Our IHT recovery result also inherits an inflated sparsity level $\tilde s > s$ from \citet{jain2014iterative}; tightening this result is a compelling direction for further study. Finally, we encourage future work to refine and scale up our implementation of memory-efficient IHT training of sparse MLPs to enable both memory-efficient and fast training of high-performing sparse MLPs.

\subsubsection*{Acknowledgments}
This work was supported in part by the NSF Mathematical Sciences Postdoctoral Research Fellowship under award number 2303178, in part by the National Science Foundation (NSF) CAREER Award under Grant CCF-2236829, in part by the National Institutes of Health under Grant 1R01AG08950901A1, in part by the Office of Naval Research under Grant N00014-24-1-2164, and in part by the Defense Advanced Research Projects Agency under Grant HR00112490441.
We are grateful to Gordon Wetzstein for helpful discussions on applications of sparse optimization.

\clearpage
\bibliography{references}
\bibliographystyle{iclr2026_conference}

\clearpage
\appendix
\section*{Appendices}

\section{Experimental methods}
\label{sec:experimental_methods}

In this section we describe our experimental implementation of IHT as well as a network-pruning baseline algorithm, IMP \citep{frankle2018lottery}. Our experiments are built on a mixture of PyTorch and CuPy, and our code is available at \url{https://github.com/voilalab/MLP-IHT}.

\paragraph{IHT updates.}
Our experiments use the classic IHT update rule $\w^{k+1} = H_s(\w^k - \eta_k\nabla f(\w^k))$, where $f(\w^k)$ is the objective function to be minimized. We do not inflate the projection sparsity level to the $\tilde s$ required in our theoretical analysis; doing so would likely further improve performance at the cost of inflating memory usage. For most of our experiments we use the mean squared error (MSE) objective with gradient $\nabla f_{MSE} = A^T(A\w^k - \measurement)$. This yields the update rule in \Cref{eq:iht}, where in our experiments we use hard thresholding to enforce $s$-sparsity but not $r$-structure in the neurons. However, for our experiments on multiclass classification, we use the cross-entropy objective whose gradient is $\nabla f_{CE}(\w^k) = A^T(\text{softmax}(A\w^k - \measurement))$.

\paragraph{Memory-efficient IHT implementation.}
A key strength of IHT is its memory efficiency, since only the nonzero weights and their indices need to be stored during optimization. However, achieving this memory efficiency requires careful implementation because each gradient $\nabla f(\w^k)$ is a dense vector rather than a sparse one, and because the sensing matrix $A$ is huge. Our implementation is therefore blockwise. Instead of storing the entire matrix $A \in \R^{\n \times \datadim\p}$ we generate each $\n \times \datadim$ block on the fly as it is needed. Instead of computing the entire gradient $\nabla f(\w^k) \in \R^{\datadim\p}$ at once, we compute each $\datadim$-dimensional block and apply it to the sparse iterate $\w^{k+1}$ before computing the next block. Mathematically this is equivalent to computing the entire gradient and performing a single hard thresholding, but it can be far more memory efficient. The choice of block size is a design parameter that allows our IHT implementation strategy to trade off memory and computation time, allowing large sparse models to be trained under diverse hardware constraints.

\paragraph{Sequential convex IHT.}
In our theoretical analysis, we assume that the $\p$ activation patterns $(\mathbb{I}\{\X\h_i \geq 0\})$ are enumerated to include all possible unique activation patterns based on fixed, sparse generator vectors $\h_i$. This construction produces a large-scale but convex and very sparse optimization problem. In our experiments, for computational efficiency we instead solve a sequentially convex optimization that switches between the convex formulation in \Cref{eq:convexAhnotation} and the nonconvex formulation in \Cref{eq:nonconvexAnotation}. We choose a fixed hidden dimension $\m$ (rather than a larger enumerated dimension $\p$) for the network weights, and frequently update the construction of the sensing matrix $A \in \R^{\n \times \datadim \m}$ to maintain consistency with the weights $\w \in \R^{\datadim\m}$ as they evolve during optimization, starting from a random initialization. In between these updates to $A$ the formulation is fixed and convex, hence the terminology of sequential convex optimization.
We can equivalently view optimization in this sequential convex formulation as a time-varying dynamical system in which the sensing matrix $A$ is really $A_k$, as it depends on the current weight estimate $\w^k$.
We find the best performance arises from a two-stage optimization procedure in which we hold the generator vectors inside $A$ fixed at their random initialization until completion of the first epoch (pass through the training dataset), and then allow the generator vectors to update after each subsequent IHT iteration. Intuitively, this procedure stabilizes the first phase of optimization by maintaining convexity, and then allows for refinement of the sensing matrix once IHT has had the opportunity to enter a region of attraction around the global optimum.

\paragraph{Vector-output MLPs.}
The formulation in \Cref{eq:nonconvexAnotation} and \Cref{eq:convexAhnotation} assumes a scalar-output MLP in which the output layer can be fused to the hidden layer weights. In an MLP with vector-valued output, we instead have $\hat \measurement = (\X\W)_+\tilde\W$, where as before $\X \in \R^{\n\times\datadim}$ and $\W \in \R^{\datadim \times \m}$ for hidden dimension $\m$, but now $\hat\measurement \in \R^{\n\times \outputdim}$ and $\tilde\W \in \R^{\m \times \outputdim}$ for output dimension $\outputdim$. We can no longer fuse the output layer weights, so we optimize the following formulation:
\begin{equation}
\label{eq:nonconvexAnotation_vectoroutput}
    \hat\measurement = \begin{bmatrix}
        \diag{(\mathbb{I}\{\X\w_1\geq 0\})}\X & \dots & \diag{(\mathbb{I}\{\X\w_\m\geq 0\})}\X
    \end{bmatrix} \begin{bmatrix}
        \w_1\tilde\w_1^T \\
        \vdots \\
        \w_\m\tilde\w_m^T 
    \end{bmatrix}
\end{equation}
where $\w_i \in \R^\datadim$ is a column of $\W$ (as in the scalar-output case) and $\tilde\w_i \in \R^\outputdim$ is a row of $\tilde\W$.
We use the chain rule to compute separate gradients for $\W$ and $\tilde\W$, computed blockwise and applied on-the-fly to update a global sparse weight representation for memory efficiency.

\paragraph{Layerwise optimization for deeper MLPs.}
Although it is possible to refine the formulation \Cref{eq:nonconvexAnotation} for deep MLPs \citep{deepconvex}, for simplicity of implementation we optimize deep MLPs following a layerwise approach \citep{bengio2006greedy, karimi2024forward}. For example, to optimize a 3-layer MLP (2 hidden layers plus an output layer), we proceed as follows. First, we optimize a 2-layer, vector-output MLP to find sparse weights $\W \in \R^{\datadim \times \m}$ and $\hat \W \in \R^{\m \times \outputdim}$. We then discard $\hat \W$ and freeze $\W$, treating it as an input embedding while training a second 2-layer MLP, this time with input dimension $\datadim = \m$. We note that our results for IHT on deeper MLPs are slightly pessimistic, as our optimization procedure allocates some nonzero parameters to intermediate output layers $\tilde\W$ that are not used in the final model, meaning that the final model performance is attained with strictly fewer active weights than the budgeted $s$. Nonetheless, IHT remains competitive despite this restriction (see \Cref{sec:experiments}).

\paragraph{Count sketching.}
For shallow scalar-output MLPs, we use the standard hard thresholding rule based on weight magnitudes, retaining the $s$ highest-magnitude entries in $\W$ at each iteration. For deeper and vector-output MLPs, we follow \citet{aghazadeh2018mission} and use an intermediate count sketch data structure to perform hard thresholding. Intuitively, we view the count sketch approach as a noisy but more memory efficient alternative to the deterministic sparsity inflation in our theoretical analysis.
At every gradient step, we update the count sketch to maintain a noisy estimate of the full iterate, a vectorized concatenation of $\W$ and $\tilde W$. We use a vector of dimension $4s\log(\n/s)$ to represent the count sketch, balancing memory efficiency with the level of approximation error in the sketch. At each iteration of IHT, we find the $s$ entries in the count sketch with largest estimated magnitude, and store exact values for these entries. 
We observe an empirical tradeoff in the use of a count sketch: for shallow scalar-output networks where we have a single weight matrix $\W$ to optimize, the approximation error introduced by the count sketch outweighs any benefit it brings by ``softening'' the hard thresholding operation. However, for vector-output networks or deeper networks where IHT must implicitly decide how to allocate a fixed parameter budget among $\W$ and $\tilde W$, the count sketch allows IHT to make less myopic thresholding decisions that aggregate information from multiple gradient steps, the benefits of which appear to outweigh the cost of approximation error in the count sketch.

\paragraph{Step size selection.}
For our experiments with IHT, we use two different step size selection methods. For shallow scalar-output networks, we fuse the output layer weights following \Cref{eq:nonconvexAnotation}. We can then compute an adaptive stepsize to minimize the mean squared error (MSE) objective function
\begin{equation} \label{eq:optimalstepsize}
    \eta_k = \frac{\twonorm{A_{\text{supp}(\w^k)}^T(\measurement - A\w^k)}^2}{\twonorm{A_{\text{supp}(\w^k)}A^T(\measurement-A\w^k)}^2} ,
\end{equation}
following \citet{blumensath2010normalized}. However, as \Cref{eq:optimalstepsize} does not directly apply to vector-output networks, for these we use a fixed stepsize $\eta_k = \eta$ for both $\W$ and $\tilde\W$, and manually tune $\eta$. 
This manual tuning surely leaves room for improvement with an adaptive strategy, which we defer to future work.

\paragraph{Accelerated IHT.}
For shallow scalar-output networks, we use an accelerated IHT following \citet{blumensath2012accelerated}, which defines an accelerated IHT as any algorithm that augments the classic IHT update with a refinement step to produce an iterate that is both sparse and has objective value no larger than that of the iterate produced by classic IHT. Specifically, after each IHT update we take a few gradient steps restricted to the current set of nonzero weights, to lower the objective value without changing the sparse support \citep{blumensath2012accelerated}. We do not find an empirical benefit to this acceleration procedure for vector-output MLPs, so we perform acceleration only for IHT on scalar-output networks.

\paragraph{IMP baseline.}
We compare our IHT approach for optimizing sparse MLPs with iterative magnitude pruning (IMP) \citep{frankle2018lottery}, a high-performing baseline method for pruning neural networks that has been shown to find sparse networks that often match or exceed the quality of their dense counterparts. IMP begins by training a dense network, and then iteratively prunes (sets to zero) a constant fraction of the active (nonzero) weights based on magnitude, rewinds the remaining active weights to their initialization values, and retrains. IMP thus allows pruning a dense network to any desired sparsity level, but requires sufficient memory to train the dense network and sufficient computation time to iteratively retrain it during pruning. Although this IMP process is computationally costly, it provides a strong baseline of performance that can be achieved with a sparse network using existing methods. \citet{frankle2018lottery} suggest pruning 20\% of the active weights at each iteration to balance computation and performance; we prune only 10\% of the active weights at each iteration to maximize IMP performance and provide as strong a baseline as possible.

\paragraph{Minibatches.}
For both IHT and IMP, each iteration operates on a minibatch of the full dataset. For IHT, we perform a minibatch update by subsampling the $\n$ rows of both $A$ and $\measurement$. For fitting a planted sparse MLP as well as for 10-way MNIST classification we use a minibatch size of 5000, and for binary MNIST classification we use a minibatch size of 1000. These settings all correspond to 10\% of each training dataset per minibatch.
For fitting an implicit neural representation to MNIST and CIFAR-10 images, we use full-batch updates as each image is small.

\section{Meta-Experiments}
\label{sec:more_experiments}

\Cref{fig:planted_scalaroutput_std,fig:planted_vectoroutput_std,fig:mnist_std,fig:mnist_inr_std,fig:cifar_inr_std} parallel \Cref{fig:planted_scalaroutput,fig:planted_vectoroutput,fig:mnist,fig:inr_mnist,fig:inr_cifar} but include standard deviation over the three random trials; high-variance experiments tend to align with empirical phase transitions.

As our experiments use sequential convex updates to the matrix $A$ whereas our theoretical analysis assumes $A$ is static, we empirically evaluate the convergence of $A$ under our sequential convex updating strategy. Specifically, we consider the following setting: fitting a scalar-output planted sparse 2-layer MLP with input dimension 100, hidden dimension $m=10$, and sparsity level $s=500$, optimized over 100 steps of IHT with random initialization and sequential convex updates to $A$. We update the $A$ matrix every $k$ steps (in our main experiments $k=1$), and report how the PSNR (fitting quality metric, higher is better) changes as a function of $k$. Note that when $k=100$, $A$ is never updated because this experiment only runs for 100 steps. Results are reported in \Cref{tab:Aconvergence1}. We see that when $A$ is updated fairly frequently ($k<5$) IHT converges and matches the planted model up to numerical precision. However, as $A$ is updated less frequently, the model no longer converges within 100 steps, and PSNR degrades with increasing $k$.

\begin{table}[ht]
    \centering
    \begin{tabular}{r|r}
k &	PSNR \\
\hline
1 &	161.44 \\
2 &	161.44 \\
3 &	161.44 \\
4 &	161.44 \\
5 &	36.63 \\
6 &	31.85 \\
7 &	31.84 \\
8 &	28.49 \\
9 &	29.88 \\
10 &	28.38 \\
15 &	27.23 \\
20 &	25.85 \\
25 &	24.75 \\
30 &	24.74 \\
35 &	23.32 \\
40 &	23.35 \\
50 &	22.02 \\
60 &	21.95 \\
70 &	21.96 \\
80 &	21.97 \\
90 &	21.89 \\
98 &	21.57 \\
100 &	19.88
    \end{tabular}
    \caption{Fitting a scalar-output 2-layer MLP with input dimension $100$, hidden dimension $m=10$, and sparsity level $s=500$, optimized over $100$ steps of IHT with random initialization and sequential convex updates to $A$ every $k$ steps. In our main experiments $k=1$; here we observe that fitting quality degrades with increasing $k$.}
    \label{tab:Aconvergence1}
\end{table}

We also evaluate the convergence of $A$ more directly, by checking the first step at which the support of $A$ (which columns are active) stops changing; results are reported in \Cref{tab:Aconvergence2}. We find that as $k$ increases, $A$ takes longer to converge. As long as $k<5$, $A$ still converges within 100 steps. However, when $k=5$, $A$ does not converge within 100 steps, likely explaining why there is a dramatic drop in PSNR in \Cref{tab:Aconvergence1} when $k$ increases from 4 to 5. If we instead continue optimization longer when $k=5$, $A$ does converge at step 135, after which the PSNR increases to 161.44 (corresponding to machine precision for this task). From this set of experiments, we see that the frequency of sequential convex updates to A controls the convergence rate of IHT. This experiment also suggests a natural stopping criterion for IHT: when the support of A stops changing (though in our main experiments we use a fixed number of steps, for fair comparison to IMP).

\begin{table}[ht]
    \centering
    \begin{tabular}{r|c}
k &	convergence step \\
\hline
1 &	43 \\
2 &	64 \\
3 &	75 \\
4 &	84
    \end{tabular}
    \caption{Fitting a scalar-output 2-layer MLP with input dimension $100$, hidden dimension $m=10$, and sparsity level $s=500$, optimized over $100$ steps of IHT with random initialization and sequential convex updates to $A$ every $k$ steps. In our main experiments $k=1$; here we observe that $A$ converges more slowly with increasing $k$. We report ``convergence step" as the first step of optimization at which the set of active columns of $A$ does not change, signaling support recovery.}
    \label{tab:Aconvergence2}
\end{table}

\newcommand{\plotstd}[1]{%
\adjincludegraphics[trim={{0.0\width} {0.07\height} {0.0\width} {0.5\height}}, clip, width=\linewidth]{#1}%
}

\newcommand{\plotreplica}[1]{%
\adjincludegraphics[trim={{0.0\width} {0.5\height} {0.0\width} {0.0\height}}, clip, width=\linewidth]{#1}%
}

\begin{figure}[ht]
    \centering
    \plotstd{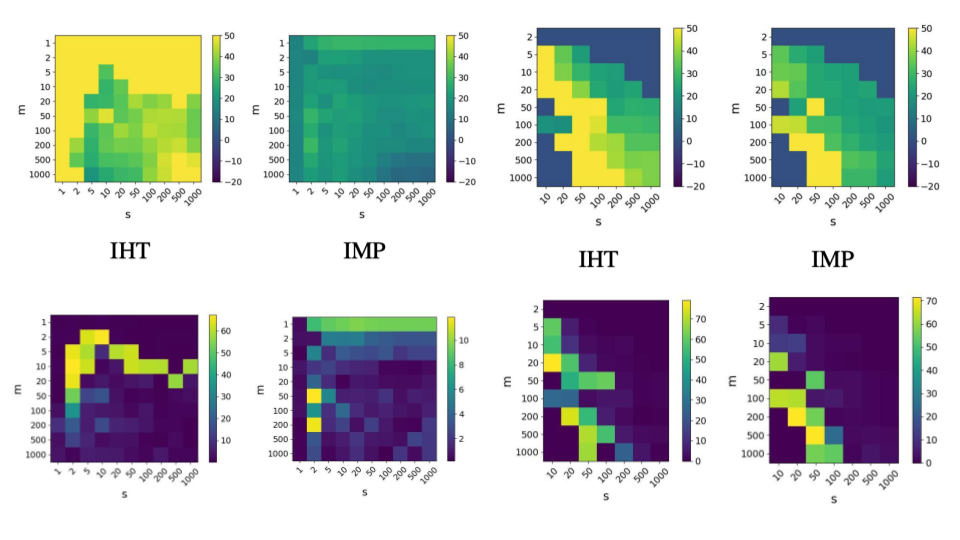}
    \plotreplica{figures_appendix/scalar_output_planted_sparse_MLP.png}
    \caption{Standard deviation over the three random trials (top row) for each experiment reported in \Cref{fig:planted_scalaroutput} (bottom row).}
    \label{fig:planted_scalaroutput_std}
\end{figure}

\begin{figure}[ht]
    \centering
    \plotstd{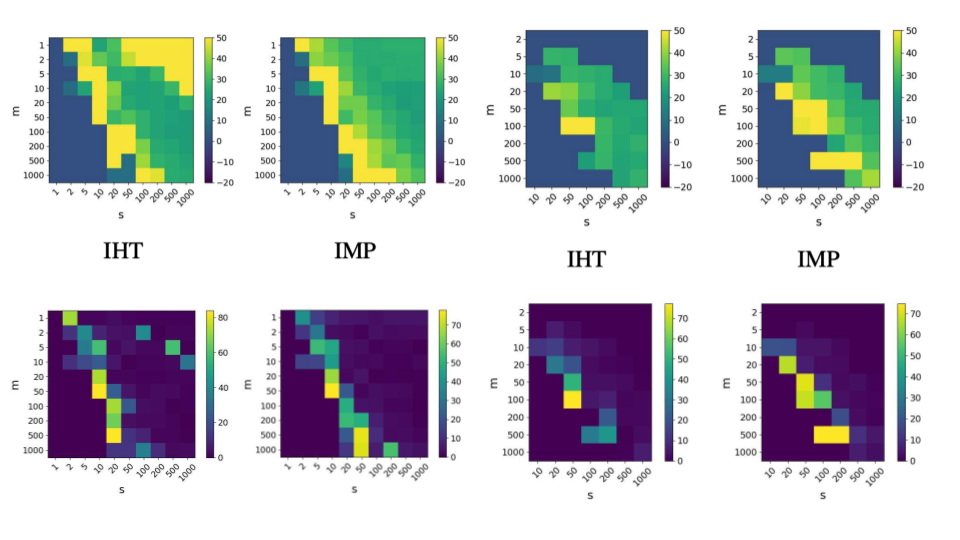}
    \plotreplica{figures_appendix/vector_output_planted_sparse_MLP.png}
    \caption{Standard deviation over the three random trials (top row) for each experiment reported in \Cref{fig:planted_vectoroutput} (bottom row).}
    \label{fig:planted_vectoroutput_std}
\end{figure}

\newcommand{\plotstdsquare}[1]{%
\adjincludegraphics[trim={{0.0\width} {0.05\height} {0.0\width} {0.55\height}}, clip, width=\linewidth]{#1}%
}

\newcommand{\plotreplicasquare}[1]{%
\adjincludegraphics[trim={{0.0\width} {0.45\height} {0.0\width} {0.0\height}}, clip, width=\linewidth]{#1}%
}

\begin{figure}[ht]
    \centering
    \plotstdsquare{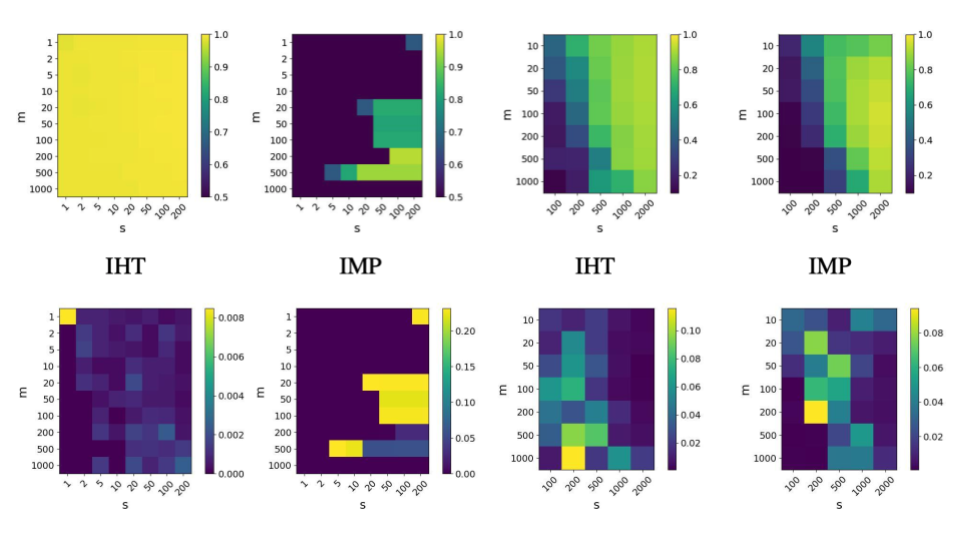}
    \plotreplicasquare{figures_appendix/MNIST_classification.png}
    \caption{Standard deviation over the three random trials (top row) for each experiment reported in \Cref{fig:mnist} (bottom row).}
    \label{fig:mnist_std}
\end{figure}

\newcommand{\plotstdsquat}[1]{%
\adjincludegraphics[trim={{0.0\width} {0.15\height} {0.0\width} {0.45\height}}, clip, width=\linewidth]{#1}%
}

\newcommand{\plotreplicasquat}[1]{%
\adjincludegraphics[trim={{0.0\width} {0.55\height} {0.0\width} {0.0\height}}, clip, width=\linewidth]{#1}%
}

\begin{figure}[ht]
    \centering
    \plotstdsquat{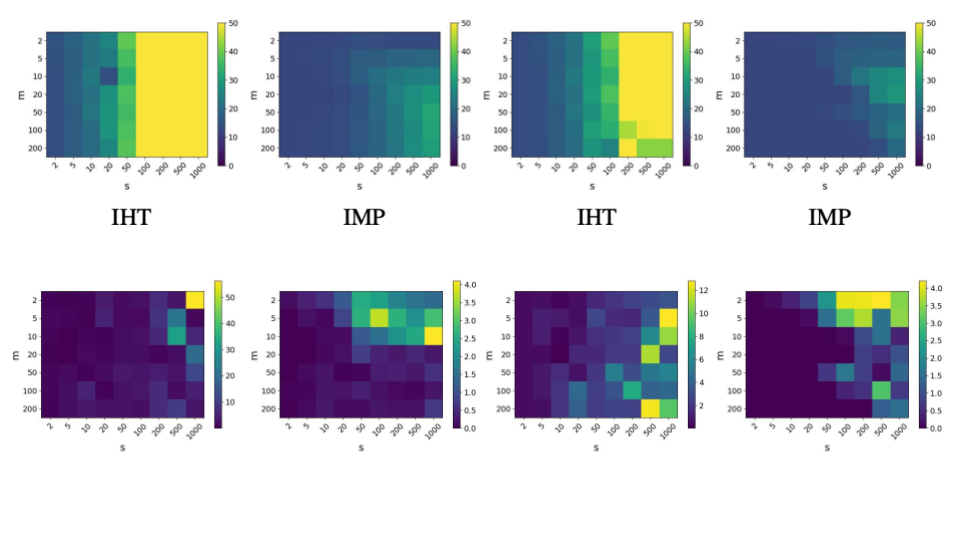}
    \plotreplicasquat{figures_appendix/MNIST_INR.png}
    \caption{Standard deviation over the three random trials (top row) for each experiment reported in \Cref{fig:inr_mnist} (bottom row).}
    \label{fig:mnist_inr_std}
\end{figure}

\begin{figure}[ht]
    \centering
    \plotstdsquat{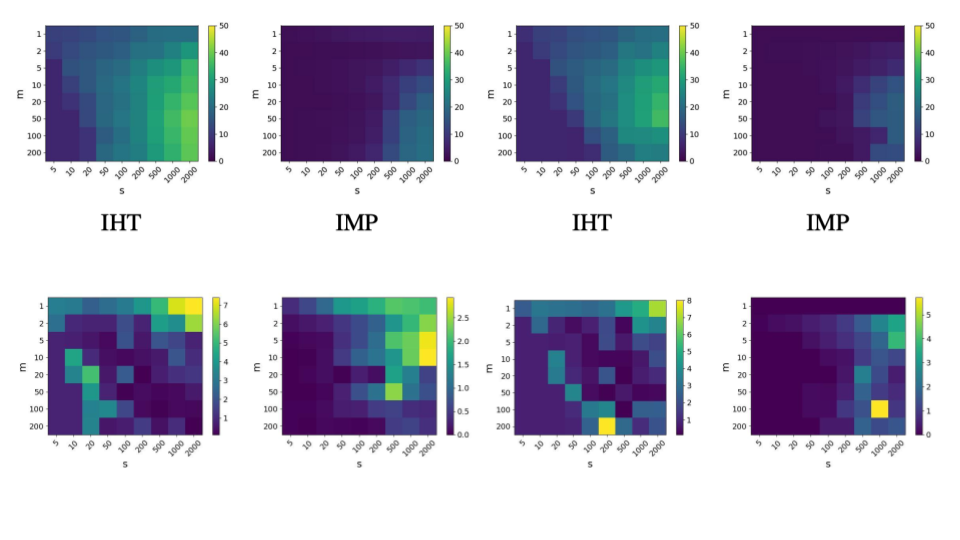}
    \plotreplicasquat{figures_appendix/CIFAR_INR.png}
    \caption{Standard deviation over the three random trials (top row) for each experiment reported in \Cref{fig:inr_cifar} (bottom row).}
    \label{fig:cifar_inr_std}
\end{figure}

\section{Proof of Lemma 1}
\begin{proof}
Our goal is to upper and lower bound the eigenvalues of $A_S^TA_S \in \R^{s \times s}$, where the full matrix $A \in \R^{\n \times \datadim\p}$ is a column-normalized version of
\begin{equation*}
    \begin{bmatrix}
    \diag{(\mathbb{I}\{\X\h_1 \geq 0\})}\X & \dots & \diag{(\mathbb{I}\{\X\h_\p \geq 0\})}\X \end{bmatrix} \in \R^{\n \times \datadim \p} ,
\end{equation*}
$\X \in \R^{\n \times \datadim}$ has i.i.d. $\mathcal{N}(0,1)$ entries, and the index set $S \subseteq [\datadim\p]$ has $|S| = s \leq \n$.

As singular values of $A_S$ are unaffected by column permutation, without loss of generality, we assume that columns of $A_S$ are ordered so that columns involving each $x_j$ are adjacent to each other.
This ordering induces a block structure in $A_S^TA_S$; we refer to the $i,j$ block submatrix as $(A_S^TA_S)_{i,j}$. Using this block structure, we bound each type of entry in $A_S^TA_S$ with high probability and then use these entry-wise bounds to upper and lower bound the eigenvalues of $A_S^TA_S$ with high probability.

First, consider a block submatrix $(A_S^TA_S)_{i,i}$ on the diagonal of $A_S^TA_S$. Since the columns of $A$ are normalized, the diagonal entries of $(A_S^TA_S)_{i,i}$ are deterministically 1 for all $i$. The off-diagonal entries of $(A_S^TA_S)_{i,i}$ take the form $\frac{x_i^TD_jD_{j'}x_i}{\twonorm{D_jx_i}\twonorm{D_{j'}x_i}}$. The numerator has $\EE[x_i^TD_jD_{j'}x_i] = \Tr{D_jD_{j'}}$, while the denominator terms have expectation $\sqrt{\Tr{D_j}}$ and $\sqrt{\Tr{D_{j'}}}$, respectively. Applying Hanson-Wright (\Cref{thm:hanson-wright}) to each quadratic form in this expression, we have
\begin{equation}\label{eq:diagonaloffdiagonal_tversion}
     \frac{x_i^TD_jD_{j'}x_i}{\twonorm{D_jx_i}\twonorm{D_{j'}x_i}} \leq \frac{\Tr{D_jD_{j'}} + t_1}{\sqrt{\Tr{D_j} - t_2}\sqrt{\Tr{D_{j'}} - t_3}}
\end{equation}
with probability at least $1 - 2\exp{\big(\frac{-ct_1^2}{\Tr{D_jD_{j'}}+t_1}\big)} - 2\exp{\big(\frac{-ct_2^2}{\Tr{D_j}+t_2}\big)} - 2\exp{\big(\frac{-ct_3^2}{\Tr{D_{j'}}+t_3}\big)}$, for a universal constant $c$. 

Consider two regimes based on whether $\Tr{D_jD_{j'}}$ is less than or greater than $\varepsilon \sqrt{\n}$.
In the first regime, $\Tr{D_jD_{j'}} \leq \varepsilon \sqrt{\n}$. 
We choose $t_1 = \varepsilon\sqrt{\n}$, $t_2 = \delta\Tr{D_j}$, and $t_3 = \delta \Tr{D_{j'}}$ for $\delta \in (0,1)$, which yields
\begin{equation*}
    \frac{x_i^TD_jD_{j'}x_i}{\twonorm{D_jx_i}\twonorm{D_{j'}x_i}} \leq \frac{\Tr{D_jD_{j'}} + \varepsilon\sqrt{\n}}{\sqrt{(1-\delta)\Tr{D_j}}\sqrt{(1-\delta)\Tr{D_{j'}}}}
\end{equation*}
with probability at least $1 - 2\exp{\big(\frac{-c\varepsilon^2\n}{\Tr{D_jD_{j'}}+\varepsilon\sqrt{\n}}\big)} - 2\exp{\big(\frac{-c\delta^2\Tr{D_j}}{1+\delta}\big)} - 2\exp{\big(\frac{-c\delta^2 \Tr{D_{j'}}}{1+\delta}\big)}$. 

Since we are in the regime where $\Tr{D_jD_{j'}} \leq \varepsilon \sqrt{\n}$ and, by Assumption 2, $D_j$ and $D_{j'}$ each have trace at least $\varepsilon \n$, we have
\begin{equation*}
    \frac{x_i^TD_jD_{j'}x_i}{\twonorm{D_jx_i}\twonorm{D_{j'}x_i}} \leq \frac{2 }{(1-\delta) \sqrt{\n}}
\end{equation*}
with probability at least $1 - 2\exp{\big(\frac{-c\varepsilon\sqrt{\n}}{2}\big)} - 4\exp{\big(\frac{-c\delta^2\varepsilon \n}{1+\delta}\big)} $. 

In the second regime, $\Tr{D_jD_{j'}} > \varepsilon \sqrt{\n}$. In \Cref{eq:diagonaloffdiagonal_tversion} we choose $t_1 = \delta\Tr{D_jD_{j'}}$, $t_2 = \delta\Tr{D_j}$, and $t_3 = \delta \Tr{D_{j'}}$, for $\delta \in (0,1)$, to yield
\begin{equation*}
\frac{x_i^TD_jD_{j'}x_i}{\twonorm{D_jx_i}\twonorm{D_{j'}x_i}} \leq \frac{(1+\delta)\Tr{D_jD_{j'}}}{(1-\delta)\sqrt{\Tr{D_j}}\sqrt{\Tr{D_{j'}}}}
\end{equation*}
with probability at least $1 - 2\exp{\big(\frac{-c\delta^2\Tr{D_jD_{j'}}}{1+\delta}\big)} - 2\exp{\big(\frac{-c\delta^2\Tr{D_j}}{1+\delta}\big)} - 2\exp{\big(\frac{-c\delta^2\Tr{D_{j'}}}{1+\delta}\big)}$.
Without loss of generality, assume that $\Tr{D_j} \leq Tr{D_{j'}}$.
We are interested in upper bounding the quantity $\frac{x_i^TD_jD_{j'}x_i}{\twonorm{D_jx_i}\twonorm{D_{j'}x_i}}$, which is maximized when all entries that take value 1 in $D_j$ also take value 1 in $D_{j'}$. This choice maximizes $\Tr{D_jD_{j'}}$ for any fixed values of $\Tr{D_j}$ and $\Tr{D_{j'}}$. 
Let $\Tr{D_j} = \xi \n$.
By Assumption 2, $D_j$ and $D_{j'}$ must take different values (one is 0 and the other is 1) in at least $\gamma \n$ diagonal positions. Combining this with our observation of the maximizing arrangement of ones and zeros in $D_j$ and $D_j'$, for this arrangement we have that $\Tr{D_{j'}} \geq \Tr{D_j} + \gamma\n = (\xi + \gamma)\n$ and $\Tr{D_jD_{j'}} = \Tr{D_j} = \xi\n$. Since we are in the regime where $\Tr{D_jD_{j'}} > \varepsilon \sqrt{\n}$ and, by Assumption 2, $D_j$ and $D_{j'}$ each have trace at least $\varepsilon \n$, we have
\begin{equation*}
\frac{x_i^TD_jD_{j'}x_i}{\twonorm{D_jx_i}\twonorm{D_{j'}x_i}} \leq \frac{(1+\delta)\sqrt{\xi}}{(1-\delta)\sqrt{\xi + \gamma}}
\end{equation*}
with probability at least $1 - 2\exp{\big(\frac{-c\delta^2\varepsilon\sqrt{\n}}{1+\delta}\big)} - 4\exp{\big(\frac{-c\delta^2\varepsilon\n}{1+\delta}\big)}$.
This upper bound is increasing in $\xi$, which can take value at most $1 - \gamma$ since $\Tr{D_{j'}} \geq (\xi + \gamma)\n$ and by construction $D_{j'} \leq \n$. We therefore set $\xi = 1 - \gamma$ to yield the bound:
\begin{equation*}
\frac{x_i^TD_jD_{j'}x_i}{\twonorm{D_jx_i}\twonorm{D_{j'}x_i}} \leq \frac{(1+\delta)\sqrt{1-\gamma}}{1-\delta} ,
\end{equation*}
which holds with probability at least $1 - 2\exp{\big(\frac{-c\delta^2\varepsilon\sqrt{\n}}{1+\delta}\big)} - 4\exp{\big(\frac{-c\delta^2\varepsilon\n}{1+\delta}\big)}$.
Since this second-regime bound is independent of $\n$ while the first-regime bound decays as $\n^{-1/2}$, for large $\n$ the second-regime bound dominates. For all $\delta \in (0,1)$ we also have that $1 - 2\exp{\big(\frac{-c\delta^2\varepsilon\sqrt{\n}}{1+\delta}\big)} - 4\exp{\big(\frac{-c\delta^2\varepsilon\n}{1+\delta}\big)} \leq 1 - 2\exp{\big(\frac{-c\varepsilon\sqrt{\n}}{2}\big)} - 4\exp{\big(\frac{-c\delta^2\varepsilon \n}{1+\delta}\big)} $. Therefore, we conclude that 
\begin{equation*}
0 \leq \frac{x_i^TD_jD_{j'}x_i}{\twonorm{D_jx_i}\twonorm{D_{j'}x_i}} \leq \frac{(1+\delta)\sqrt{1-\gamma}}{1-\delta} ,
\end{equation*}
holds with probability at least $1 - 2\exp{\big(\frac{-c\delta^2\varepsilon\sqrt{\n}}{1+\delta}\big)} - 4\exp{\big(\frac{-c\delta^2\varepsilon\n}{1+\delta}\big)}$ for all off-diagonal entries of a diagonal block submatrix $(A_S^TA_S)_{i,i}$. Here we include a deterministic lower bound $\frac{x_i^TD_jD_{j'}x_i}{\twonorm{D_jx_i}\twonorm{D_{j'}x_i}} \geq 0$, which holds because $D_jD_{j'}$ is a diagonal matrix with all entries nonnegative.

Next, we consider a block submatrix $(A_S^TA_S)_{i,i'}$ that is off the diagonal of $A_S^TA_S$. 
The entries of $(A_S^TA_S)_{i,i'}$ take the form $\frac{x_i^TD_jD_{j'}x_{i'}}{\twonorm{D_jx_i}\twonorm{D_{j'}x_{i'}}}$, where $D_j$ and $D_{j'}$ may be the same or different. 
The numerator has $\EE[x_i^TD_jD_{j'}x_{i'}] = 0$, while the denominator terms have expectation $\sqrt{\Tr{D_j}}$ and $\sqrt{\Tr{D_{j'}}}$, respectively. We bound this expression with high probability by applying Hanson-Wright to each term in the denominator, and asymmetric Hanson-Wright to the numerator:
\begin{equation}\label{eq:offdiagonal_tversion}
    \frac{|x_i^TD_jD_{j'}x_{i'}|}{\twonorm{D_jx_i}\twonorm{D_{j'}x_{i'}}} \leq \frac{t_1}{\sqrt{\Tr{D_j} - t_2}\sqrt{\Tr{D_{j'}} - t_3}}
\end{equation}
with probability at least $1 - 2\exp{\big(\frac{-ct_1^2}{\Tr{D_jD_{j'}}+t_1}\big)} - 2\exp{\big(\frac{-ct_2^2}{\Tr{D_j}+t_2}\big)} - 2\exp{\big(\frac{-ct_3^2}{\Tr{D_{j'}}+t_3}\big)}$, for a universal constant $c$.
We choose $t_2 = \delta\Tr{D_j}$ and $t_3 = \delta \Tr{D_{j'}}$, for $\delta \in (0,1)$, to yield
\begin{equation*}
    \frac{|x_i^TD_jD_{j'}x_{i'}|}{\twonorm{D_jx_i}\twonorm{D_{j'}x_{i'}}} \leq \frac{t_1}{(1-\delta)\sqrt{\Tr{D_j}}\sqrt{\Tr{D_{j'}}}}
\end{equation*}
with probability at least $1 - 2\exp{\big(\frac{-ct_1^2}{\Tr{D_jD_{j'}}+t_1}\big)} - 2\exp{\big(\frac{-c\delta^2\Tr{D_j}}{1+\delta}\big)} - 2\exp{\big(\frac{-c\delta^2\Tr{D_{j'}}}{1+\delta}\big)}$. By Assumption 2, $\Tr{D_j} \geq \varepsilon \n$ and $\Tr{D_{j'}} \geq \varepsilon \n$, so we have
\begin{equation*}
    \frac{|x_i^TD_jD_{j'}x_{i'}|}{\twonorm{D_jx_i}\twonorm{D_{j'}x_{i'}}} \leq \frac{t_1}{(1-\delta)\varepsilon \n}
\end{equation*}
with probability at least $1 - 2\exp{\big(\frac{-ct_1^2}{\Tr{D_jD_{j'}}+t_1}\big)} - 4\exp{\big(\frac{-c\delta^2\varepsilon \n}{1+\delta}\big)}$. Now, we consider two regimes depending on whether $\Tr{D_jD_{j'}}$ is less than or greater than $\varepsilon \n$.

In the regime where $\Tr{D_jD_{j'}} \leq \varepsilon \n$, we choose $t_1 = \varepsilon \n^{3/4}$, yielding:
\begin{equation*}
    \frac{|x_i^TD_jD_{j'}x_{i'}|}{\twonorm{D_jx_i}\twonorm{D_{j'}x_{i'}}} \leq \frac{ 1}{(1-\delta) \n^{1/4}}
\end{equation*}
with probability at least $1 - 2\exp{\big(\frac{-c\varepsilon\n^{3/4}}{1 + \n^{1/4}}\big)} - 4\exp{\big(\frac{-c\delta^2\varepsilon \n}{1+\delta}\big)}$.

In the regime where $\Tr{D_jD_{j'}} > \varepsilon \n$, we choose $t_1 = \delta \n^{-1/4} \Tr{D_jD_{j'}} $. Combining this with the implication of Assumption 2 that $\Tr{D_jD_{j'}} \leq (1-\gamma)\n$, we have
\begin{equation*}
    \frac{|x_i^TD_jD_{j'}x_{i'}|}{\twonorm{D_jx_i}\twonorm{D_{j'}x_{i'}}} \leq \frac{\delta \n^{-1/4} \Tr{D_jD_{j'}}}{(1-\delta)\varepsilon \n} \leq \frac{\delta  (1-\gamma)}{(1-\delta)\varepsilon \n^{1/4}}
 \end{equation*}
with probability at least $1 - 2\exp{\big(\frac{-c\delta^2 \n^{3/4} \varepsilon}{\n^{1/4} +\delta  }\big)} - 4\exp{\big(\frac{-c\delta^2\varepsilon \n}{1+\delta}\big)}$. 
For all $\delta \in (0,1)$, $1 - 2\exp{\big(\frac{-c\delta^2 \n^{3/4} \varepsilon}{\n^{1/4} +\delta  }\big)} - 4\exp{\big(\frac{-c\delta^2\varepsilon \n}{1+\delta}\big)} \leq 1 - 2\exp{\big(\frac{-c\varepsilon\n^{3/4}}{1 + \n^{1/4}}\big)} - 4\exp{\big(\frac{-c\delta^2\varepsilon \n}{1+\delta}\big)}$, and for $\delta \leq \frac{\varepsilon}{1-\gamma}$ we have $\frac{ 1}{(1-\delta) \n^{1/4}} \geq \frac{\delta  (1-\gamma)}{(1-\delta)\varepsilon \n^{1/4}}$. Combining these, we have that
\begin{equation*}
    \frac{|x_i^TD_jD_{j'}x_{i'}|}{\twonorm{D_jx_i}\twonorm{D_{j'}x_{i'}}} \leq \frac{ 1}{(1-\delta) \n^{1/4}}
\end{equation*}
holds with probability at least $1 - 2\exp{\big(\frac{-c\delta^2 \n^{3/4} \varepsilon}{\n^{1/4} +\delta  }\big)} - 4\exp{\big(\frac{-c\delta^2\varepsilon \n}{1+\delta}\big)}$, for all $\delta \in (0, \frac{\varepsilon}{1-\gamma})$ and all entries of a block submatrix $(A_S^TA_S)_{i,i'}$ that is off the diagonal of $A_S^TA_S$. 

Now that we have high probability (and in some cases deterministic) upper and lower bounds on each entry of $A_S^TA_S$, we combine them into high probability bounds on the eigenvalues of $A_S^TA_S$. We can decompose $A_S^TA_S = B + C$, where $B$ is block diagonal and $C$ is dense except for having zeros in block-diagonal entries.
First, consider a single block submatrix $B_{i,i}$ on the diagonal of $B$. This block submatrix has diagonal values deterministically 1, and off-diagonal entries bounded deterministically from below by 0 and bounded above by $\frac{(1+\delta)\sqrt{1-\gamma}}{1-\delta} $
with probability at least $1 - 2\exp{\big(\frac{-c\delta^2\varepsilon\sqrt{\n}}{1+\delta}\big)} - 4\exp{\big(\frac{-c\delta^2\varepsilon\n}{1+\delta}\big)}$. 
We use the variational definition of the minimum and maximum eigenvalues, and refer to $B_{i,i}$ as $\tilde B$ so that subscripts may denote indices within the block submatrix:
\begin{align*}
    \lambda_\text{min}(\tilde B) &= \min_{\twonorm{u} = 1}u^T\tilde Bu \\
    &= \min_{\twonorm{u} = 1}\sum_{i}\tilde B_{i,i}u_i^2 + \sum_{i\neq j}B_{i,j}u_iu_j \\
    &\overset{(a)}= 1+ \min_{\twonorm{u} = 1}\sum_{i\neq j}B_{i,j}u_iu_j \\
    &\overset{(b)}{\geq} 1 - \frac{(1+\delta)\sqrt{1-\gamma}}{1-\delta} ,
\end{align*}
where in (a) we use the deterministic facts that diagonal entries of $\tilde B$ take value 1 and that $\twonorm{u}=1$, and in (b) we use the high-probability upper bound on the magnitude of $\tilde B_{i,j}$ and the observation that the minimum is achieved by a vector $u \perp \mathbf{1}$ (this makes the cross terms most negative). By a similar line of reasoning, we can bound 
\begin{align*}
    \lambda_\text{max}(\tilde B) &= 1+ \max_{\twonorm{u} = 1}\sum_{i\neq j}B_{i,j}u_iu_j \\
    &\overset{(a)}{\leq} 1 + \frac{(1+\delta)\sqrt{1-\gamma}}{1-\delta}(s-1) ,
\end{align*}
where in (a) we use the high-probability upper bound on the magnitude of $\tilde B_{i,j}$, the observation that the maximum is achieved by a vector $u \parallel \mathbf{1}$, and the requirement that the maximum dimension of $\tilde B$ is $s \times s$. Both of these bounds hold with probability at least $1 - 2s(s-1)\exp{\big(\frac{-c\delta^2\varepsilon\sqrt{\n}}{1+\delta}\big)} - 4s(s-1)\exp{\big(\frac{-c\delta^2\varepsilon\n}{1+\delta}\big)}$, by a union bound over all off-diagonal entries in $\tilde B$.
The spectrum of the full block matrix $B$ is bounded by the minimum and maximum eigenvalues of its largest block, which can have size at most $s \times s$. Thus the bounds above on $\lambda_\text{min}(\tilde B)$ and $\lambda_\text{max}(\tilde B)$ also apply to $B$.

Since $A_S^TA_S = B + C$, it remains to bound the spectrum of $C$ and combine the results.
The structure of $C$ is dense, with block diagonal submatrices of value zero and each other entry bounded between $-\frac{ 1}{(1-\delta) \n^{1/4}}$ and $\frac{ 1}{(1-\delta) \n^{1/4}}$
with probability at least $1 - 2\exp{\big(\frac{-c\delta^2 \n^{3/4} \varepsilon}{\n^{1/4} +\delta  }\big)} - 4\exp{\big(\frac{-c\delta^2\varepsilon \n}{1+\delta}\big)}$, for all $\delta \in (0, \frac{\varepsilon}{1-\gamma})$.
For this matrix, we use a coarse bound that $\opnorm{C} \leq \frac{ s}{(1-\delta) \n^{1/4}}$, which holds with probability at least $1 - 2s(s-1)\exp{\big(\frac{-c\delta^2 \n^{3/4} \varepsilon}{\n^{1/4} +\delta  }\big)} - 4s(s-1)\exp{\big(\frac{-c\delta^2\varepsilon \n}{1+\delta}\big)}$ for all $\delta \in (0, \frac{\varepsilon}{1-\gamma})$, following a union bound.

Combining these spectral bounds on $B$ and $C$ via Weyl's inequality, we have that
\begin{equation*}
    \lambda_\text{min}(A_S^TA_S) \geq 1 - \frac{(1+\delta)\sqrt{1-\gamma}}{1-\delta} - \frac{ s}{(1-\delta) \n^{1/4}}
\end{equation*}
and 
\begin{equation*}
    \lambda_\text{max}(A_S^TA_S) \leq 1 + \frac{(s-1)(1+\delta)\sqrt{1-\gamma}}{1-\delta} + \frac{ s}{(1-\delta) \n^{1/4}}
\end{equation*}
for any $\delta \in (0, \frac{\varepsilon}{1-\gamma})$ with probability at least $1 - 2s(s-1)\exp{\big(\frac{-c\delta^2\varepsilon\sqrt{\n}}{1+\delta}\big)} - 2s(s-1)\exp{\big(\frac{-c\delta^2 \n^{3/4} \varepsilon}{\n^{1/4} +\delta  }\big)} - 8s(s-1)\exp{\big(\frac{-c\delta^2\varepsilon \n}{1+\delta}\big)}$. 
\end{proof}

\begin{theorem}[Hanson-Wright \citep{boucheronbook}]\label{thm:hanson-wright}
    Let $x$ be a random vector with i.i.d. zero-mean 1-sub-Gaussian entries. Let $H$ be a square matrix. Then for a universal constant $c$
    \begin{equation*}
        \Prob\big[\abs{x^THx - \EE[x^THx]} \geq t\big] \leq 2\exp\left(- \frac{ct^2}{\fronorm{H}^2 + \opnorm{H}t}\right) .
    \end{equation*}
    If $H$ is a diagonal matrix with all diagonal entries equal to either zero or one, $\fronorm{H}^2 = \Tr{H}$ and $\opnorm{H} = 1$.
    We also use an asymmetric version of Hanson-Wright, derived as follows. Let 
    \begin{equation*}
        H = \begin{bmatrix}
            0 & \tilde H \\ 0 & 0 
        \end{bmatrix}; ~~~ x = \begin{bmatrix}
            u\\ v
        \end{bmatrix} ;  
    \end{equation*}
    yielding 
    \begin{equation*}
        \Prob\left[\abs{u^T\tilde Hv - \EE[u^T\tilde Hv]} \geq t\right] \leq 2\exp\left(- \frac{ct^2}{\fronorm{\tilde H}^2 + \opnorm{\tilde H}t}\right) .
    \end{equation*}
\end{theorem}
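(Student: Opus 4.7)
The symmetric Hanson-Wright inequality is a classical result, so my plan is to invoke it from the cited reference (\citet{boucheronbook}) rather than reprove it from scratch; what needs to be written out is the reduction that gives the asymmetric version. Still, I would briefly recall the standard proof skeleton for completeness. The standard argument works via the Laplace transform: by Markov's inequality it suffices to bound $\EE\exp(\lambda(x^THx - \EE[x^THx]))$. One splits $H$ into its diagonal and off-diagonal parts. The diagonal part contributes $\sum_i H_{ii}(x_i^2 - \EE x_i^2)$, which is a sum of independent centered sub-exponential random variables and is controlled by Bernstein's inequality. The off-diagonal part is handled by a decoupling step $\EE\exp(\lambda\sum_{i\neq j}H_{ij}x_ix_j)\le \EE\exp(4\lambda\sum_{i,j}H_{ij}x_ix'_j)$ for an independent copy $x'$, followed by conditioning on $x'$ (so that the resulting quadratic form is linear in $x$ with sub-Gaussian variance proxy $\|Hx'\|_2^2$) and then integrating out $x'$. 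Combining the two contributions and optimizing over $\lambda$ yields the stated mixed sub-Gaussian/sub-exponential tail bound with denominator $\fronorm{H}^2 + \opnorm{H}\,t$ up to a universal constant $c$.

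Given the symmetric inequality, I would derive the asymmetric statement by the explicit block-matrix embedding indicated in the theorem statement. Set
\begin{equation*}
H = \begin{bmatrix} 0 & \tilde H \\ 0 & 0 \end{bmatrix}, \qquad x = \begin{bmatrix} u \\ v \end{bmatrix},
\end{equation*}
where $u$ and $v$ are independent vectors of i.i.d.\ zero-mean 1-sub-Gaussian entries (so $x$ satisfies the hypothesis of the symmetric inequality). A direct block multiplication gives $x^T H x = u^T\tilde H v$, and therefore $\EE[x^THx] = \EE[u^T\tilde Hv]$. It is immediate from the block structure that $\fronorm{H}^2 = \fronorm{\tilde H}^2$, and since $H$ has the same nonzero singular values as $\tilde H$, also $\opnorm{H} = \opnorm{\tilde H}$. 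Applying the symmetric Hanson-Wright bound to $x^THx$ then yields
\begin{equation*}
\Prob\left[\abs{u^T\tilde Hv - \EE[u^T\tilde Hv]} \geq t\right] \leq 2\exp\left(- \frac{ct^2}{\fronorm{\tilde H}^2 + \opnorm{\tilde H}t}\right),
\end{equation*}
as claimed.

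One technical wrinkle is that the symmetric Hanson-Wright bound is usually stated for a symmetric $H$, while the $H$ constructed above is strictly upper-triangular. The standard way around this is to note that $x^T H x = x^T H_{\mathrm{sym}} x$ where $H_{\mathrm{sym}} = (H + H^T)/2$, and that $\fronorm{H_{\mathrm{sym}}} \le \fronorm{H}$ and $\opnorm{H_{\mathrm{sym}}} \le \opnorm{H}$; these inequalities only affect the universal constant $c$, leaving the form of the bound intact. The ``main obstacle'' here is really bookkeeping rather than any substantive difficulty: one must verify that $u$ and $v$ together being i.i.d.\ 1-sub-Gaussian matches the hypothesis on $x$, and that the constants absorbed from symmetrization and the singular-value identification for $H$ versus $\tilde H$ do not depend on dimension, so that $c$ remains a universal constant. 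With those checks done, the asymmetric inequality is an immediate corollary of the symmetric one.
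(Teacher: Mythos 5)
Your proposal is correct and follows essentially the same route as the paper: the symmetric inequality is invoked from the cited reference, and the asymmetric version is obtained by exactly the block embedding $H = \bigl[\begin{smallmatrix} 0 & \tilde H \\ 0 & 0 \end{smallmatrix}\bigr]$ that the paper itself sketches, with your verification that $x^THx = u^T\tilde Hv$, $\fronorm{H} = \fronorm{\tilde H}$, and $\opnorm{H} = \opnorm{\tilde H}$ filling in the details the paper leaves implicit. Your symmetrization remark is a valid (and correctly oriented) piece of bookkeeping that the paper omits.
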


\section{Proof of Theorem 1}

 \begin{proof}
        The proof of Theorem 1 combines Lemma 1 with \Cref{thm:fromjain} (Theorem 1 in \citet{jain2014iterative}), which shows that IHT with an inflated sparsity level can recover a sparse signal in a linear inverse problem as long as the sensing matrix satisfies the restricted strong convexity and restricted strong smoothness properties with any positive finite parameters; i.e. with an arbitrary finite restricted condition number. 

        \Cref{thm:fromjain} shows directly that, under the conditions in the theorem statement, the objective value converges as $f(\w^K) - f(\w^\star) \leq \epsilon$. This implies convergence of iterates due to restricted strong convexity and the fact that $\nabla f(\w^\star) = 0$:
        \begin{equation*}
            f(\w^K) - f(\w^\star) \geq \langle \w^K - \w^\star, \nabla f(\w^\star)\rangle + \frac{\alpha}{2}\twonorm{\w^K - \w^\star}^2 = \frac{\alpha}{2}\twonorm{\w^K - \w^\star}^2 ,
        \end{equation*}
        which proves the additional result that $\twonorm{\w^K - \w^\star}^2 \leq \frac{2}{\alpha}\epsilon$.
        
        We note that the proof in \citet{jain2014iterative} also implies that, if IHT projects onto the smaller sparsity level $s$ rather than the inflated sparsity level $\tilde s$, each step of IHT is still guaranteed to not increase the MSE loss $f(\w)$; the requirement that $\tilde s > s$ allows for strict objective decrease in each step.
    \end{proof}
\begin{theorem}[\citet{jain2014iterative}]\label{thm:fromjain}
    Assume that the objective $f$ has restricted strong convexity parameter $\alpha$ and restricted strong smoothness parameter $L$ at sparsity level $2\tilde s+s$, with $\tilde s > 32\big(\frac{L}{\alpha}\big)^2s$. Assume that $\theta^\star = \arg\min_{\zeronorm{\theta}\leq s}f(\theta)$, i.e. that the true signal is $s$-sparse. Then IHT with projection (hard thresholding) to sparsity level $\tilde s$ and step size $\eta = \frac{2}{3L}$, run for $K = \mathcal{O}\big( \frac{L}{\alpha}\log\big( \frac{f(\theta^0)}{\epsilon} \big) \big)$ iterations, achieves
    \begin{equation*}
        f(\theta^K) - f(\theta^\star) \leq \epsilon .
    \end{equation*}
\end{theorem}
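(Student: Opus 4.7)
The plan is to combine \Cref{lemma:rsc_rss} with \Cref{thm:fromjain} (the generalized IHT recovery result of \citet{jain2014iterative}) in a modular fashion, since the two pieces were designed to fit together: \Cref{lemma:rsc_rss} supplies exactly the restricted strong convexity (RSC) and restricted smoothness (RSS) constants that \Cref{thm:fromjain} requires as input. The only non-trivial bridges are (i) passing from spectral bounds on $A_S^T A_S$ to RSC/RSS for the MSE loss, and (ii) converting the objective-value guarantee into an iterate guarantee.

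First I would verify that $f(\w) = \frac{1}{2}\twonorm{A\w - \measurement}^2$ satisfies RSC with parameter $\alpha$ and RSS with parameter $\beta$ at sparsity level $2\tilde s + s$. For any $\w, \w'$ whose difference is supported on an index set $S$ with $|S| \leq 2\tilde s + s$, a direct second-order expansion gives
\begin{equation*}
f(\w) - f(\w') - \inprod{\nabla f(\w')}{\w - \w'} = \frac{1}{2}(\w - \w')^T A_S^T A_S (\w - \w'),
\end{equation*}
so the uniform eigenvalue bounds of \Cref{lemma:rsc_rss}, instantiated at sparsity parameter $2\tilde s + s$, immediately yield the RSC and RSS inequalities at that level. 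The remaining hypotheses of \Cref{thm:fromjain} are satisfied by inspection: $\w^\star$ is $s$-sparse, the inflation $\tilde s \geq 32(\beta/\alpha)^2 s$ matches exactly, and $\eta = 2/(3\beta)$ is the prescribed step size. Invoking \Cref{thm:fromjain} then produces the objective bound $f(\w^K) - f(\w^\star) \leq \epsilon$ after $K = \mathcal{O}\!\big((\beta/\alpha)\log(f(\w^0)/\epsilon)\big)$ iterations.

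Second, I would convert this into an iterate bound by exploiting that the measurements are noiseless. Since $\measurement = A\w^\star$, we have $\nabla f(\w^\star) = A^T(A\w^\star - \measurement) = 0$. The iterate $\w^K$ is $\tilde s$-sparse by the hard-thresholding step and $\w^\star$ is $s$-sparse by assumption, so their difference has support of size at most $\tilde s + s \leq 2\tilde s + s$, which is exactly the regime where RSC applies. Hence
\begin{equation*}
f(\w^K) - f(\w^\star) \geq \inprod{\nabla f(\w^\star)}{\w^K - \w^\star} + \frac{\alpha}{2}\twonorm{\w^K - \w^\star}^2 = \frac{\alpha}{2}\twonorm{\w^K - \w^\star}^2,
\end{equation*}
which rearranges to $\twonorm{\w^K - \w^\star}^2 \leq 2\alpha^{-1}\epsilon$, as claimed.

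The main point that requires care, rather than a substantive obstacle, is the sparsity inflation bookkeeping: \Cref{thm:fromjain} simultaneously tracks several $\tilde s$-sparse vectors and needs RSC/RSS over $(2\tilde s + s)$-sparse differences, so \Cref{lemma:rsc_rss} must be applied with its abstract parameter $s$ replaced by $2\tilde s + s$. This inflates the sparsity argument appearing in the probability and conditioning expressions of \Cref{lemma:rsc_rss}, but the ``same high probability'' clause in the theorem statement is understood to absorb this reindexing, so no additional probabilistic estimates beyond \Cref{lemma:rsc_rss} are required.
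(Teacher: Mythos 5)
Your proposal does not prove the stated theorem; it proves a different one. The statement in question is \Cref{thm:fromjain}, the general convergence guarantee for IHT with inflated hard thresholding due to \citet{jain2014iterative}: for \emph{any} objective $f$ satisfying restricted strong convexity and smoothness at level $2\tilde s + s$ with $\tilde s > 32(L/\alpha)^2 s$, IHT with projection to level $\tilde s$ drives $f(\theta^K)-f(\theta^\star)$ below $\epsilon$ in $\mathcal{O}\bigl((L/\alpha)\log(f(\theta^0)/\epsilon)\bigr)$ iterations. Your argument explicitly invokes \Cref{thm:fromjain} as a black box (``Invoking \Cref{thm:fromjain} then produces the objective bound''), so as a proof of \Cref{thm:fromjain} it is circular. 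What you have actually written is, almost step for step, the paper's proof of \Cref{thm:ihtrecovers_s_inflation}: instantiate the RSC/RSS hypotheses via \Cref{lemma:rsc_rss} at sparsity level $2\tilde s + s$, apply \Cref{thm:fromjain}, and then convert the objective bound into the iterate bound $\twonorm{\w^K-\w^\star}^2 \le 2\alpha^{-1}\epsilon$ using $\nabla f(\w^\star)=0$ and restricted strong convexity. That part is correct as far as it goes, but it establishes the downstream theorem, not the cited result.

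The missing content is the per-iteration analysis of IHT that constitutes the substance of \citet{jain2014iterative}'s argument: one must show that the hard thresholding operator $H_{\tilde s}$, applied to the gradient step $\theta^k - \eta\nabla f(\theta^k)$, loses only a controlled fraction of the available descent relative to the best $s$-sparse point, so that the objective decreases geometrically, i.e.\ $f(\theta^{k+1})-f(\theta^\star) \le \bigl(1-c\,\alpha/L\bigr)\bigl(f(\theta^k)-f(\theta^\star)\bigr)$ for some constant $c>0$. This is precisely where the inflation condition $\tilde s > 32(L/\alpha)^2 s$ and the step size $\eta = 2/(3L)$ are used, and none of it appears in your proposal. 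For what it is worth, the paper itself does not prove \Cref{thm:fromjain} either; it imports it by citation. If your intent was to treat it as an external result, you should say so explicitly rather than offer a derivation that presupposes it.
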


\section{Proof that Assumption 2 Follows from Assumption 1 with High Probability}
\label{sec:assumptionproof}

Let $D_i = \diag{(\mathbb{I}\{\X\h_i \geq 0\})} \in \R^{\n \times \n}$, with $\{D_i\}_{i=1}^\p$ as the set of all such distinct activation patterns possible under Assumption 1 with data $\X \in \R^{\n \times \datadim}$, whose entries are drawn i.i.d. $\sim \mathcal{N}(0,1)$. Assumption 2 has the following two components:
    \begin{enumerate}
        \item $\Tr{D_i} \geq \varepsilon \n$ for all $i \in [\p]$, for some $\varepsilon \in (0,1)$.
        \item For all $i \neq i'$, the diagonals of $D_i$ and $D_{i'}$ differ in at least $\gamma \n$ positions, for some $\gamma \in (0,1)$.
    \end{enumerate}

\subsection{Component 1: lower bound on trace of activation patterns $D_i$}

Component 1 follows from \Cref{lemma:lowerboundsize}, which does not require Assumption 1.

\begin{lemma}[based on \citet{ergen2019random}] \label{lemma:lowerboundsize}
Let $S=\{i\,:\, x_i^T\h> 0\}$, where $x_i$ are i.i.d standard Gaussian vectors distributed as $\mathcal{N}(0,I_\datadim)$. Then with probability at least $1 - e^{-\n\big( \varphi (1-\varepsilon)- \mathcal{H}(\varepsilon) \big)}$, 
$\inf_{\h} |S| \ge \n \varepsilon $. Here $\varepsilon \in (0,1)$, $\varphi$ is a fixed numerical constant satisfying $\frac{1}{2}-\sqrt{8\varphi} > 0$, $n$ satisfies $\n\big(\frac{1}{2}-\sqrt{8\varphi}\big) \geq \datadim$, and $\mathcal{H}$ is the binary entropy function.
\begin{proof}
Consider the symmetric event
$E := \sup_{\h\neq 0} |\{i\,:\, x_i^T\h\le 0\}| \ge \n (1-\varepsilon)$. Then
\begin{align*}
\Prob[E] &\le \sum_{\substack{V\subseteq [\n]\\ |V|\ge \n(1-\varepsilon) }} \Prob\big[ \exists \h\neq 0 \mbox{  s.t.  } x_i^T \h \le 0,\, \forall i\in V \big]\\
 &\le {\n \choose \n(1-\varepsilon)} e^{-\varphi \n (1-\varepsilon)}\\
 & \le e^{-\n\big( \varphi (1-\varepsilon)- \mathcal{H}(\varepsilon) \big)}
\end{align*}
in which the second inequality follows from the Kinematic Formula (by flipping the sign of $\h$).
\end{proof}
\end{lemma}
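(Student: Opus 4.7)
The plan is to exploit the rotational symmetry of the Gaussian and reduce the uniform lower bound on $|S|$ to a tail bound on a union over subsets of $[n]$, where each summand measures the chance that many i.i.d.\ Gaussian vectors jointly lie in a common closed half-space through the origin.

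First I would translate the event of interest into a symmetric form. Because the distribution of each $x_i$ is invariant under $x_i \mapsto -x_i$, flipping $h$ shows that the event $\inf_h |S| < n\varepsilon$ coincides with the event
\begin{equation*}
E := \big\{\sup_{h \neq 0} |\{i : x_i^T h \leq 0\}| \geq n(1-\varepsilon)\big\}.
\end{equation*}
If $E$ occurs, there must exist some subset $V \subseteq [n]$ with $|V| \geq n(1-\varepsilon)$ and some nonzero $h$ such that $x_i^T h \leq 0$ for every $i \in V$. Thus, discretizing over the combinatorial choice of $V$ and union-bounding gives
\begin{equation*}
\Pr[E] \leq \sum_{\substack{V \subseteq [n] \\ |V| \geq n(1-\varepsilon)}} \Pr\!\big[\exists h \neq 0:\, x_i^T h \leq 0\,\forall i \in V\big].
\end{equation*}

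Next I would bound each summand. The inner event is precisely the event that the Gaussian vectors $\{x_i\}_{i \in V}$ fail to positively span $\mathbb{R}^d$, i.e., they all lie in a common closed half-space through the origin. This probability is governed by the Kinematic Formula for Gaussian processes (essentially Wendel's theorem, sharpened via Gordon-type escape-through-mesh arguments). Under the stated regime $n\bigl(\tfrac{1}{2} - \sqrt{8\varphi}\bigr) \geq d$, this half-space probability is at most $e^{-\varphi |V|} \leq e^{-\varphi n(1-\varepsilon)}$. This is the technical heart of the argument and where the numerical constant $\varphi$, together with the sample-size requirement relative to $d$, is forced upon us.

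Finally, I would combine the two ingredients with the standard entropy estimate on the binomial coefficient: the number of subsets $V$ with $|V| \geq n(1-\varepsilon)$ is at most $\binom{n}{\lceil n(1-\varepsilon)\rceil} \leq e^{n \mathcal{H}(\varepsilon)}$. Plugging in gives
\begin{equation*}
\Pr[E] \leq e^{n \mathcal{H}(\varepsilon)} \cdot e^{-\varphi n(1-\varepsilon)} = e^{-n\bigl(\varphi(1-\varepsilon) - \mathcal{H}(\varepsilon)\bigr)},
\end{equation*}
matching the claim. The main obstacle is the per-subset Kinematic-type bound: the union-bound step and the entropy estimate are routine, but establishing exponential decay in $|V|$ at the rate $\varphi$ uniformly in $h$ requires invoking a non-trivial Gaussian half-space coverage result, and it is precisely this step that imposes the constraint relating $\varphi$, $d$, and $n$ stated in the lemma.
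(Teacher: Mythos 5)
Your proposal follows essentially the same route as the paper's proof: pass to the complementary event $\sup_{\h\neq 0}|\{i : x_i^T\h \le 0\}| \ge \n(1-\varepsilon)$, union-bound over subsets $V$ of size at least $\n(1-\varepsilon)$, bound each summand by $e^{-\varphi \n(1-\varepsilon)}$ via the Kinematic Formula (with the sign flip of $\h$), and absorb the binomial coefficient with the entropy bound $\binom{\n}{\n(1-\varepsilon)} \le e^{\n\mathcal{H}(\varepsilon)}$. The argument is correct and matches the paper's step for step, including the same implicit treatment of the subset-size count and of applying the half-space bound to the restricted rows.
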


\begin{theorem}[Kinematic Formula \citep{amelunxen2014living}]
\label{ThmKinematic}
Let $\X$ be an $\n\times \datadim$ i.i.d. Gaussian matrix and $G=\X \Sigma^{1/2}$ with any $\Sigma \succ 0$. If $\n$ satisfies $\n (\frac{1}{2} - \sqrt{8\varphi}) \ge \datadim$, we have
\begin{align*}
\Prob \left[ \exists \h\neq 0 \mbox{ s.t. } G\h\ge 0\right] = \Prob \left[ \exists \tilde \h\neq 0 \mbox{ s.t. } \X \tilde \h\ge 0\right] \le e^{-\varphi \n}.
\end{align*}
\end{theorem}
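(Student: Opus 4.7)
The plan is to derive the bound from the approximate kinematic formula of Amelunxen--Lotz--McCoy--Tropp (2014), which is the headline result of the cited paper. The equality $\Prob[\exists \h\neq 0:\ G\h\ge 0]=\Prob[\exists \tilde\h\neq 0:\ \X \tilde\h\ge 0]$ is immediate: since $\Sigma\succ 0$, the change of variables $\tilde\h=\Sigma^{1/2}\h$ is a bijection of $\R^\datadim\setminus\{0\}$ onto itself, and it converts the coordinate-wise inequality $G\h\ge 0$ into $\X\tilde\h\ge 0$. So from now on I only need to control the second probability.

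Next I would reformulate the event as a conic intersection. The event $\{\exists \tilde\h\neq 0:\ \X\tilde\h\ge 0\}$ is the event $\{\range(\X)\cap \R^\n_{\ge 0}\neq \{0\}\}$. By rotational invariance of the i.i.d.\ standard Gaussian, $\range(\X)$ is (almost surely, given $\datadim\le \n$) uniformly distributed on the Grassmannian of $\datadim$-dimensional subspaces of $\R^\n$; equivalently, $\range(\X)\overset{d}{=}QL_0$ for a Haar-random orthogonal matrix $Q$ and any fixed $\datadim$-dim subspace $L_0$. This is the canonical setup for the kinematic formula.

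Now I would invoke the approximate kinematic formula: for closed convex cones $C, K \subset \R^\n$ with statistical dimensions $\delta(C),\delta(K)$,
\begin{equation*}
\delta(C)+\delta(K)\le \n-\lambda \;\;\Longrightarrow\;\; \Prob\big[C\cap QK\neq \{0\}\big]\le 4 e^{-\lambda^2/8\n}.
\end{equation*}
I would take $C=\R^\n_{\ge 0}$ and $K=L_0$. The statistical dimension of a $\datadim$-dimensional linear subspace is $\datadim$, and the standard computation $\delta(\R^\n_{\ge 0})=\EE\|[g]_+\|_2^2=\n/2$ for $g\sim \NORMAL(0,I_\n)$ gives $\delta(\R^\n_{\ge 0})=\n/2$. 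The hypothesis $\n(\tfrac12-\sqrt{8\varphi})\ge \datadim$ rearranges to $\n/2-\datadim\ge \sqrt{8\varphi}\,\n$, so I can take $\lambda=\sqrt{8\varphi}\,\n$, making the admissibility condition hold and giving $e^{-\lambda^2/8\n}=e^{-\varphi \n}$.

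The main obstacle I anticipate is cosmetic rather than substantive: the prefactor $4$ in the approximate kinematic formula is not present in the theorem statement. I would absorb it either by shrinking $\varphi$ by an arbitrarily small amount (permissible since $\varphi$ is an unspecified numerical constant whose only role is to satisfy $\tfrac12-\sqrt{8\varphi}>0$), or by appealing to the sharper one-sided Gaussian concentration bounds in Section 5 of Amelunxen et al.\ (2014), which remove the prefactor at the cost of different constants inside the exponent. All of the real content sits inside the kinematic formula; my work is to identify the correct two cones, compute their statistical dimensions ($\datadim$ and $\n/2$), and choose $\lambda$ to match the stated hypothesis.
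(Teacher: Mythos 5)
The paper itself offers no proof of this statement: it is imported verbatim from \citet{amelunxen2014living} (and is used only inside the proof of \Cref{lemma:lowerboundsize}), so there is no internal argument to compare against. Your reconstruction is the standard derivation and is essentially correct. The change of variables $\tilde\h=\Sigma^{1/2}\h$ settles the equality; under the hypothesis $\n(\tfrac12-\sqrt{8\varphi})\ge \datadim$ we have $\datadim\le \n/2<\n$, so $\X$ has full column rank almost surely and the event coincides a.s.\ with $\{\range(\X)\cap\R^{\n}_{\ge 0}\neq\{0\}\}$; rotational invariance makes $\range(\X)$ Haar-distributed on the Grassmannian; and the approximate kinematic formula with $\delta(\R^{\n}_{\ge 0})=\n/2$, $\delta(L_0)=\datadim$ and $\lambda=\sqrt{8\varphi}\,\n$ (admissible exactly because $\n/2-\datadim\ge\sqrt{8\varphi}\,\n$) yields the bound $4e^{-\varphi\n}$. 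One small point worth stating explicitly if you write this up: the ALMT theorem requires that at least one of the two cones not be a subspace, which holds here since the nonnegative orthant is not a subspace.

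The only gap is the one you flag, the prefactor $4$, and your first proposed fix is slightly too casual. Absorbing $4$ by shrinking $\varphi$ to $\varphi'<\varphi$ only works once $\n\ge \log 4/(\varphi-\varphi')$, so it is not a uniform-in-$\n$ removal of the constant; moreover $\varphi$ is not entirely free in this paper, since the appendix remark instantiates $\varphi=\tfrac{1}{128}$ to get the explicit probability in \Cref{assumption 2}.1. The cleanest resolution is simply to carry the factor $4$ through: it multiplies the failure probability in \Cref{lemma:lowerboundsize} (and hence in \Cref{assumption 2}.1) by a constant and changes nothing qualitative. Your alternative suggestion of using the sharper intrinsic-volume concentration bounds in Amelunxen et al.\ to trade the prefactor for different constants in the exponent is plausible but left unverified; as written, the honest conclusion of your argument is the bound $4e^{-\varphi\n}$ rather than $e^{-\varphi\n}$.
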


\begin{remark}
If we further assume Assumption 1, specifically that $\zeronorm{h} \leq s_i \leq k$, we can tighten \Cref{lemma:lowerboundsize} as follows. If $\zeronorm{h} \leq k \leq \datadim$, and we set $\varphi = \frac{1}{128}$, then $\Tr{D_i} \geq \varepsilon\n$ with probability at least  $1 - e^{-\n\big( \frac{1-\varepsilon}{128}- \mathcal{H}(\varepsilon) \big)}$ as long as $\n \geq 4k$. Following a union bound over $\p$ activation patterns, $\Tr{D_i} \geq \varepsilon\n$ for all $i \in [\p]$ with probability at least  $1 - \p e^{-\n\big( \frac{1-\varepsilon}{128} - \mathcal{H}(\varepsilon) \big)}$, as long as $\n \geq 4k$.
\end{remark}

\subsection{Component 2: Hamming separation of activation patterns}

\begin{definition}[$\delta$-isometric embedding \citep{tessellation}]
    \label{def:isometric embedding}
    A map $f: X \rightarrow Y$ is a $\delta$-isometry between metric space $X$ with distance metric $d_X$ and metric space $Y$ with distance metric $d_Y$ if, for all $x, x' \in X$, $\vert d_Y(f(x), f(x')) - d_X(x, x') \vert \leq \delta$.
\end{definition}

\begin{theorem}[Hamming embedding, Theorem 1.5 in \citet{tessellation}]
\label{thm:Hamming embedding}
    Consider a subset $K \subseteq S^{\datadim-1}$ and let $\delta > 0$. Let $\X$ be an $\n \times \datadim$ random matrix with independent $\mathcal{N}(0,1)$ entries. Let $\n \geq C\delta^{-6}w(K)^2$, where $w(K) := \mathbb{E}\sup_{x\in K}\vert\langle g, x \rangle\vert$ is the Gaussian mean width of $K$, with $g \sim \mathcal{N}(0, I_\datadim)$.
    Then with probability at least $1 - 2\exp(-c\delta^2\n)$, the sign map $f(x) = \sign(\X x)$, $f: K \rightarrow \{-1, 1\}^\n$ is a $\delta$-isometric embedding between $K \subseteq S^{\datadim-1}$ with normalized geodesic distance metric $d_G(x, x') = \frac{1}{\pi}\cos^{-1}( x^Tx')$ and $\{-1,1\}^\n$ with normalized Hamming distance metric $d_H(f(x), f(x')) = \frac{1}{\n}\sum_{i=1}^\n f(x)_i\neq f(x')_i$. Here $C$ and $c$ denote positive absolute constants.
\end{theorem}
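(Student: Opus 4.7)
The plan is to establish the $\delta$-isometry property by combining a classical single-row identity with a smoothing-and-chaining argument that lifts pointwise concentration to a uniform statement over $K$. The starting point is the rotational-invariance identity: for $g \sim \mathcal{N}(0, I_\datadim)$ and any $x, x' \in S^{\datadim-1}$, a standard two-dimensional reduction gives $\Prob[\sign(g^T x) \neq \sign(g^T x')] = \frac{1}{\pi}\arccos(x^T x') = d_G(x,x')$. Applying this to each of the $\n$ independent rows of $\X$, the Hamming fraction $d_H(f(x), f(x'))$ is an average of $\n$ i.i.d.\ Bernoullis with mean $d_G(x,x')$, so Hoeffding gives $|d_H(f(x),f(x')) - d_G(x,x')| \le \delta$ with probability at least $1 - 2\exp(-2\delta^2 \n)$ for any \emph{fixed} pair.

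The main obstacle is promoting this pointwise concentration to a uniform statement over all $x, x' \in K$. A naive net argument fails because $x \mapsto \sign(\X x)$ is discontinuous, so $\twonorm{x - x'}$ being small does not translate to $d_H(f(x),f(x'))$ being small. The standard fix, following Plan--Vershynin, is to replace the indicator $\mathbb{I}\{\sign(u) \neq \sign(v)\}$ by a bivariate soft indicator $\phi_t(u,v)$ that is Lipschitz in $(u,v)$ with constant of order $1/t$ and that agrees with the true indicator except on the ``ambiguous band'' $\{|u| < t\} \cup \{|v| < t\}$. The bias incurred by this replacement is uniformly bounded by $\Prob[|g^T x| \le t] = O(t)$, which is independent of the particular $x \in S^{\datadim-1}$ since $g^T x \sim \mathcal{N}(0,1)$.

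With this smoothing in hand, the uniform part reduces to bounding the supremum, over $x, x' \in K$, of an empirical process of Lipschitz functionals of the Gaussian matrix $\X$. Gaussian Lipschitz concentration together with a Gordon/Slepian comparison (equivalently, a Dudley-type chaining bound) controls this supremum in terms of the Gaussian mean width $w(K)$, yielding a deviation bound whose radius scales like $w(K)/(t \sqrt{\n})$ on top of the $O(t)$ smoothing bias. Choosing $t \asymp \delta$ and requiring both pieces to be at most $\delta$ forces $\sqrt{\n} \gtrsim w(K)/\delta^{2}$; tracking the polynomial factors that enter through the Lipschitz constant of $\phi_t$ and the mean-width comparison inflates this to the stated $\n \gtrsim \delta^{-6} w(K)^2$, with the failure probability inherited from the Gaussian concentration step being $2\exp(-c\delta^2 \n)$.

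The hardest step is this smoothing-and-chaining argument, and in particular calibrating $t$ against the concentration radius so that neither the approximation bias nor the Gaussian fluctuations dominate. The $\delta^{-6}$ dependence is the price paid for reducing a discontinuous sign-pattern matching problem to a smooth, Gaussian-Lipschitz functional; tightening this exponent is an active area of research but is not needed for the downstream application to \Cref{assumption 2}.
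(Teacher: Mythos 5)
This statement is not proved in the paper at all: it is imported verbatim as Theorem 1.5 of \citet{tessellation} (Plan--Vershynin), and the paper's appendix only restates it (and adapts it in \Cref{cor:hamming_unnormalized_01}) before applying it to \Cref{assumption 2}.2. So there is no in-paper proof to compare against; what you have written is a reconstruction of the original Plan--Vershynin argument, and in outline it is the right one: the Grothendieck-type identity $\Prob[\sign(g^Tx)\neq\sign(g^Tx')]=\frac{1}{\pi}\arccos(x^Tx')$ plus Hoeffding handles a fixed pair, and the discontinuity of the sign map is circumvented by a soft (margin-$t$) Hamming distance whose uniform deviations over $K$ are controlled through the Gaussian mean width $w(K)$, with the calibration of $t$ against the concentration radius producing the $\delta^{-6}$ sample-size requirement and the $1-2\exp(-c\delta^2\n)$ probability.

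Two points in your sketch are too casual to count as a proof, though both are repaired by the same machinery you invoke. First, bounding the soft-versus-hard discrepancy by ``$\Prob[|g^Tx|\le t]=O(t)$, independent of $x$'' only controls the \emph{expected} fraction of ambiguous rows; what the argument needs is a bound on the \emph{realized} fraction of rows $i$ with $|a_i^Tx|\le t$, uniformly over $x\in K$, which is itself an empirical-process estimate requiring its own mean-width-based concentration step (Plan--Vershynin devote a separate lemma to exactly this). Second, your calibration as stated ($t\asymp\delta$, deviation radius $w(K)/(t\sqrt{\n})\lesssim\delta$) gives $\n\gtrsim\delta^{-4}w(K)^2$, and the promotion to $\delta^{-6}$ is asserted rather than derived; the extra powers of $\delta$ come out of the two-sided comparison between the soft and hard distances and the union over the margin levels, so this bookkeeping is part of the proof, not an afterthought. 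Neither issue affects the downstream use in the paper, which only needs the theorem as stated.
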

\begin{corollary}\label{cor:hamming_unnormalized_01}
\Cref{thm:Hamming embedding} may be restated so as to apply to unnormalized generator vectors from $K \subseteq \R^\datadim$ and indicator-based rather than sign-based activation pattern embedding.
Consider a subset $K \subseteq \R^{\datadim}$ and let $\delta > 0$. Let $\X$ be an $\n \times \datadim$ random matrix with independent $\mathcal{N}(0,1)$ entries. Let $\n \geq C\delta^{-6}w(K)^2$, where $w(K) := \mathbb{E}\sup_{x\in K}\vert\langle g, \frac{x}{\twonorm{x}} \rangle\vert$ is the normalized Gaussian mean width of $K$, with $g \sim \mathcal{N}(0, I_\datadim)$.
Then with probability at least $1 - 2\exp(-c\delta^2\n)$, the indicator map $f(x) = \mathbb{I}\{\X x \geq 0\}$, $f: K \rightarrow \{0, 1\}^\n$ is a $\delta$-isometric embedding between $K \subseteq \R^{\datadim}$ with normalized geodesic distance metric $d_G(x, x') = \frac{1}{\pi}\cos^{-1}\big( \frac{x^Tx'}{\twonorm{x}\twonorm{x'}} \big)$ and $\{0,1\}^\n$ with normalized Hamming distance metric $d_H(f(x), f(x')) = \frac{1}{\n}\sum_{i=1}^\n f(x)_i\neq f(x')_i$. Here $C$ and $c$ denote positive absolute constants.
\end{corollary}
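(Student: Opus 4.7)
The plan is to reduce Corollary 1 to Theorem 2 by exploiting two simple invariances: (i) both the indicator map and the geodesic/Hamming distances involved are invariant under positive rescaling of the domain, and (ii) the sign map and the indicator map $\mathbb{I}\{\cdot\geq 0\}$ are related by an affine bijection applied coordinate-wise, so they induce identical normalized Hamming distances almost surely.

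First, I would handle the domain. Given $K\subseteq\R^{d}$ (with $0\notin K$, which is harmless since the zero vector produces the all-ones or all-zeros activation pattern trivially), define the normalized image $\tilde K := \{x/\|x\|_2 : x\in K\}\subseteq S^{d-1}$. The normalized geodesic metric $d_G$ in the corollary statement depends only on $x/\|x\|_2$ and $x'/\|x'\|_2$, so it coincides with the geodesic metric on $\tilde K$ used in Theorem 2. Likewise, the Gaussian mean width defined in the corollary, $w(K)=\EE\sup_{x\in K}|\langle g, x/\|x\|_2\rangle|$, is literally the Gaussian mean width of $\tilde K\subseteq S^{d-1}$ as required by Theorem 2. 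Hence the same sample-complexity bound $\n\geq C\delta^{-6}w(\tilde K)^2 = C\delta^{-6}w(K)^2$ applies verbatim.

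Next, I would handle the codomain. For any $a\in\R$, $\mathbb{I}\{a\geq 0\}\neq \mathbb{I}\{a'\geq 0\}$ iff $\mathrm{sign}(a)\neq \mathrm{sign}(a')$ whenever both $a,a'\neq 0$. Since $\X$ has absolutely continuous entries, for any fixed nonzero $x$ we have $\Prob[(\X x)_i=0]=0$, and a union bound over the (at most countable, or more generally measure-theoretically negligible) ``ambiguous'' set yields that, almost surely under the draw of $\X$, for every pair $x,x'\in K$ and every row $i\in[\n]$, $(\X x)_i$ and $(\X x')_i$ are nonzero. On this probability-one event, the normalized Hamming distances agree:
\begin{equation*}
d_H\bigl(\mathbb{I}\{\X x\geq 0\},\mathbb{I}\{\X x'\geq 0\}\bigr)
= d_H\bigl(\mathrm{sign}(\X x),\mathrm{sign}(\X x')\bigr).
\end{equation*}
Moreover, the indicator map is positively homogeneous of degree zero: $\mathbb{I}\{\X(\lambda x)\geq 0\}=\mathbb{I}\{\X x\geq 0\}$ for every $\lambda>0$, so evaluating the indicator embedding at $x\in K$ is equivalent to evaluating it at $x/\|x\|_2\in\tilde K$.

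Finally, I would combine the pieces. Apply Theorem 2 to $\tilde K\subseteq S^{d-1}$: with probability at least $1-2\exp(-c\delta^2\n)$, the sign map is a $\delta$-isometric embedding from $(\tilde K, d_G)$ to $(\{-1,1\}^\n, d_H)$. Intersecting with the probability-one event from the previous paragraph preserves the same probability bound, and on the intersection the indicator map from $(K, d_G)$ to $(\{0,1\}^\n,d_H)$ inherits the $\delta$-isometry property entry-for-entry. Since no step in this reduction is quantitatively delicate, I do not anticipate a genuine obstacle; the only thing to be careful about is the almost-sure identification of sign and indicator, which is cleanly handled by the absolute continuity of Gaussian linear forms. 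The corollary then follows directly from Theorem 2 with the same constants $C,c$.
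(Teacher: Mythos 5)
Your reduction is essentially the argument the paper intends: the corollary is given without a separate proof precisely because it follows from \Cref{thm:Hamming embedding} via the scale invariance of the indicator map, of $d_G$, and of the (normalized) mean width, exactly the two invariances you identify. One caveat on your codomain step: the almost-sure identification of sign with indicator via a union bound over ``ambiguous'' points is not valid when $K$ is uncountable --- for, say, $K$ containing a continuum of directions, the event that $(\X x)_i \neq 0$ simultaneously for \emph{every} $x \in K$ and every row $i$ has probability zero, not one, since each row's orthogonal hyperplane generically meets $K$. The step is, however, unnecessary: under the standard convention $\sign(t) = 2\,\mathbb{I}\{t \geq 0\} - 1$, the coordinatewise map $b \mapsto (b+1)/2$ is a bijection from $\{-1,1\}^\n$ to $\{0,1\}^\n$ carrying the sign embedding to the indicator embedding, so the two normalized Hamming distances coincide deterministically for all pairs $x, x'$ and every realization of $\X$, and no probabilistic argument is needed. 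With that replacement, your proof is correct and matches the paper's (implicit) route, with the same constants $C, c$.
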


\Cref{cor:hamming_unnormalized_01} ensures that a set of generator vectors $\{\h_i\}_{i=1}^\p$ that are sufficiently separated in normalized geodesic distance will yield activation patterns $D_i = \diag{(\mathbb{I}\{\X\h_i \geq 0\})}$ whose diagonals are separated in normalized Hamming distance, with high probability for i.i.d. Gaussian data $\X \in \R^{\n\times\datadim}$.
Specifically, for the diagonals of $D_i$ and $D_{i'}$ to differ in at least $\gamma\n$ positions for all $i \neq i'$ with probability at least $1 - 2\exp(-c\delta^2\n)$, we require a set of generator vectors $\{\h_i\}_{i=1}^\p$ that (1) include the planted first layer weights $u^\star_i$, and (2) are separated by at least $\gamma + \delta$ in normalized geodesic distance.
In \Cref{sec:sampling_generators} we show that both of these properties hold with high probability for both of the sparse weight conditions in Assumption 1 (recall that $u_i^\star$ are the first layer weights and $v_i^\star$ are the second layer weights, which are fused during IHT):
\begin{itemize}
    \item[(a)] $u_i^\star\in \{-1,0,1\}^\datadim,$ $\|u_i^\star\|_0= k, v_i^\star\in \R\,\forall i\in[\p]$ and $k\p \le s$, or
    \item[(b)] $u_i^\star\in \R^\datadim,$ $\|u^\star_i\|_0  =s_i \in [s_\text{min}, k], v_i^\star\in \{-1,1\}\,\forall i\in[\p]$ and $\sum_{i=1}^\p s_i \le s$ holds.
\end{itemize}

\subsection{Sampling sparse arrangements}
\label{sec:sampling_generators}
\subsubsection{Real-valued planted neurons}

We now show that a random sampling of hyperplane arrangements can be guaranteed to contain the planted activation patterns, while simultaneously ensuring a packing of the Euclidean sphere in $\R^{\datadim}$.
\begin{theorem}
Let \(\X \in \R^{\n \times \datadim}\) have i.i.d.\ \(\mathcal{N}(0,1)\) entries.
Fix \(\m\) unknown 
vectors \(\w_{1},\dots ,\w_{\m} \in \R^{\datadim}\) each with $\zeronorm{\w_i} = s_i \in [s_\text{min}, k]$,
and an error tolerance \(0 < \tilde\epsilon < 1\). Note that the choice of $\tilde\epsilon$ affects the permissible sparsity range $[s_\text{min}, k]$.
Set
\[
T = \left( \frac{\log(2\n)}{c} \right)^k \log\left(\frac{2\m}{\tilde\epsilon} \right) ,
\]
where \(c>0\) is an absolute constant.
Consider the set of all supports $S$ with $|S| \in [s_\text{min}, k]$, and draw $T$ supports from this set uniformly at random.
For each randomly drawn support $S$, draw $|S|$ values i.i.d. from $\mathcal{N}(0,1)$ and embed these in \(\R^{\datadim}\) by setting entries in $S$ to their random Gaussian values and zero–padding outside \(S\).
Record the two collections
\[
\Gamma
=
\bigl\{
      \mathbb{I}\!\bigl[\X\h_{j} \ge 0\bigr]
      : 1 \le j \le T
\bigr\},
\qquad
G
=
\bigl\{
      \tfrac{\widetilde{\h}_{j}}{\lVert \widetilde{\h}_{j} \rVert_{2}}
      : 1 \le j \le T
\bigr\},
\]
where 
\(\widetilde{\h}_{j}\) 
denotes the zero–padded generator and $\h_j$ is its normalized version.
There exists \(\delta>0\) such that, with probability at least
\(1-\tilde\epsilon\) over the draws of \(\X\) and all $T$ generators, the following hold simultaneously:
\begin{enumerate}
\item
\textbf{Coverage:}
\(
\mathbb{I}\!\bigl[\X \w_{i} \ge 0\bigr] \in \Gamma
\)
for every \(i \in \{1,\dots ,\m\}\).
\item
\textbf{Minimum geodesic separation:}
For all distinct $g, g'$ in $G$, $\twonorm{g - g'} \geq \tilde\delta$.
This Euclidean separation of unit vectors implies geodesic separation: $d_G(g, g') = \frac{1}{\pi}\cos^{-1}(g^Tg') \geq \frac{0.69}{\pi}\tilde\delta^2$.
\end{enumerate}
\end{theorem}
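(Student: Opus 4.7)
The plan is to establish each of the two conclusions (coverage and geodesic separation) with failure probability at most $\tilde\epsilon/2$ and then union-bound. For coverage, I would fix a planted $w_i$ and analyze the per-trial success probability
\[
p_i \;:=\; \Pr\bigl[\mathbb{I}\{X\tilde h \ge 0\} = \mathbb{I}\{X w_i \ge 0\}\bigr],
\]
where $\tilde h$ is a single sample from the described procedure. Given a uniform lower bound $p_i \geq p^\star := (c/\log(2n))^k$, the probability that $T$ independent samples all fail to match pattern $i$ is at most $(1-p^\star)^T \leq \exp(-Tp^\star) \leq \tilde\epsilon/(2m)$ by the choice of $T$, and a union bound over $i \in [m]$ yields total coverage failure at most $\tilde\epsilon/2$.

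Establishing the uniform bound $p_i \geq p^\star$ is the core technical step and the main obstacle. The plan is to restrict attention to trials whose sampled support equals $S_i := \mathrm{supp}(w_i)$ (which contributes a combinatorial factor from the uniform support selection) and then to analyze the Gaussian measure of the sign cone $\mathcal{C}_i := \{v \in \mathbb{R}^{s_i} : \mathrm{sign}(X_{S_i} v) = \mathrm{sign}(X_{S_i} (w_i)_{S_i})\}$ inside $\mathbb{R}^{s_i}$. Because $X_{S_i}$ has i.i.d.\ standard Gaussian entries, its rows form a hyperplane arrangement in general position, which by classical counting produces at most $2\sum_{j=0}^{s_i-1}\binom{n-1}{j} = \mathcal{O}((n/s_i)^{s_i})$ distinct sign cones. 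The nontrivial claim is that the Gaussian measure of $\mathcal{C}_i$ concentrates on a \emph{polylogarithmic}-in-$n$ scale rather than the naive polynomial scale suggested by uniform counting over cones, with high probability over $X_{S_i}$; combining this cone-measure bound with the support-selection probability produces $p_i \geq (c/\log(2n))^k$. This polylogarithmic cone-measure concentration is the delicate ingredient and is where the bulk of the work lies.

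For the separation claim, conditional on their supports the $T$ normalized generators $g_1,\ldots,g_T$ are independent unit vectors obtained by normalizing Gaussian vectors on their respective supports. Standard concentration for inner products of such vectors on $S^{d-1}$ yields
\[
\Pr\bigl[g_j^\top g_{j'} > 1 - \tilde\delta^2/2\bigr] \;\leq\; 2\exp\bigl(-c'\, d\,(1 - \tilde\delta^2/2)^2\bigr),
\]
which via $\|g_j - g_{j'}\|_2^2 = 2 - 2\, g_j^\top g_{j'}$ is equivalent to $\|g_j - g_{j'}\|_2 \geq \tilde\delta$. A union bound over the $\binom{T}{2}$ pairs, together with $\tilde\delta$ chosen bounded away from $\sqrt{2}$ by a margin depending logarithmically on $T/\tilde\epsilon$, keeps the separation failure under $\tilde\epsilon/2$. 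The Euclidean-to-geodesic step then uses the elementary calculus inequality $\cos^{-1}(1 - \tilde\delta^2/2) \geq 0.69\,\tilde\delta^2$ for $\tilde\delta \in [0,\sqrt{2}]$ (immediate from monotonicity of $y \mapsto \cos^{-1}(1-y)/y$ on $(0,1]$) to produce the stated bound $d_G(g,g') \geq 0.69\,\tilde\delta^2/\pi$. A final union bound between the coverage and separation events yields overall failure at most $\tilde\epsilon$, completing the argument.
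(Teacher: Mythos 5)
Your proposal has the right skeleton (per-trial success probability, $(1-p^\star)^T \le \exp(-Tp^\star)$, union bounds, and the same $\cos^{-1}(1-x)\ge 1.38x$ conversion from Euclidean to geodesic separation), but it leaves a genuine gap precisely at the step you yourself flag as ``where the bulk of the work lies.'' The coverage argument hinges on showing that the \emph{specific} sign cone of each planted neuron has Gaussian measure at least $\bigl(c/\log(2n)\bigr)^{s_i}$, and hyperplane-arrangement counting cannot deliver this: counting gives only an \emph{average} cone measure of order $(s_i/n)^{s_i}$, polynomially small in $n$, and says nothing about the particular cone containing $w_i$. The paper closes this hole with the cone-sharpness bound of \citet{kim2024convex}: with probability at least $1-(n^{-10}+e^{-c_1 s_i})$ over $X_{S_i}$, the cone $\mathcal{K}_{S_i}(w_i)$ has sharpness at most $C_* = 2+200c\sqrt{c\log(2n)}$, which yields a spherical cap of height $1/(2C_*)$ inside the cone and hence the polylogarithmic per-trial probability. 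Without this (or an equivalent quantitative ingredient), your coverage bound is an assumption, not a proof.

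The separation argument also fails as stated. Your concentration bound $\Pr[g_j^\top g_{j'} > 1-\tilde\delta^2/2] \le 2\exp\bigl(-c'\datadim(1-\tilde\delta^2/2)^2\bigr)$ with the \emph{ambient} dimension $\datadim$ in the exponent is false for these generators: they are supported on at most $k$ coordinates, and when two of them share a support of size $s_{\min}$ their normalized versions are uniform on $S^{s_{\min}-1}$, so the inner product concentrates at a rate governed by $s_{\min}$, not $\datadim$. This is exactly why the paper runs the argument through Hanson--Wright with explicit dependence on $|S|$, $|S'|$, and $|S\cap S'|$, obtains failure terms like $\exp\bigl(-c\gamma^2 s_{\min}/(1+\gamma)\bigr)$, and consequently must require $s_{\min}$ sufficiently large --- which is also why the theorem statement warns that $\tilde\epsilon$ constrains the permissible sparsity range $[s_{\min},k]$. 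Your route, taken literally, would prove separation ``for free'' regardless of sparsity, which cannot be correct and loses the $s_{\min}$ restriction that the result actually needs.
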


\begin{proof}
The strategy used to prove coverage is to show that the cones $\{u: \mathrm{sign}(Xu)= \mathrm{sign}(Xh)\}$ are not too narrow, for Gaussian i.i.d. training data $X\in \mathbb{R}^{n \times d}$ and a fixed vector $h \in \mathbb{R}^d$. Specifically, a bound on the cone sharpness developed in \cite{kim2024convex} implies that the probability that a uniformly sampled vector on the sphere falls into this cone is at least $O\big((\log n)^{-d}\big)$. We then apply this result to $n\times s_i$ submatrices of $X$ to translate it to sparse generators, and control the error probability via the union bound. We first reintroduce the notion of cone sharpness:\\
\textit{Cone sharpness.}
For any support \(S\) and non-zero \(u \in \R^{s_i}\), set
\(
D(u) := \operatorname{diag}\bigl(\mathbb{I}[\X_{S}u \ge 0]\bigr)
\)
and define the cone
\(
\mathcal{K}_{S}(u) := \{\,v \in \R^{s_i} : (2D(u)-I) \X_{S} v \ge 0 \}.
\)
By the cone–sharpness bound of~\cite{kim2024convex} there are universal constants \(c, c_1>0\) such that
\[
\Prob_{\X}\!\Bigl[C\bigl(\mathcal{K}_{S}(u), \tfrac{u}{\lVert u \rVert_{2}}\bigr) \le C_{*}\Bigr]
\,\ge\, 1 - \tilde\delta_{s_i},
\quad
C_{*} := 2 + 200c \sqrt{c \log(2n)},
\quad
\tilde\delta_{s_i} := \n^{-10} + e^{-c_{1} s_i} ,
\]
where the sharpness $C(\mathcal{K},z)$ of a cone $\mathcal{K}$ with respect to a fixed unit vector $z$ is defined as $C(\mathcal{K},z) := \min_{u, v \in \mathcal{K}, ~u-v=z} \twonorm{u} + \twonorm{v} $ following \citet{kim2024convex}. 
Let $\mathcal{E}$ be the high probability event that the cone sharpness for each of the $m$ planted neurons is at most $C_*$; $\mathcal{E}$ occurs with probability at least $1-m(n^{-10} + e^{-c_1s_\text{min}})$. 
Now we relate cone sharpness to the probability of sampling a specific pattern.

\textit{Spherical cap inclusion.}
Fix \(S\) and \(u\neq 0\) and write \(z := u / \lVert u \rVert_{2}\).
On \(\mathcal{E}\) there exist \(a,b \in \mathcal{K}_{S}(u)\) with \(a-b = z\) and
\(\lVert a \rVert_{2} + \lVert b \rVert_{2} \le C_{*}\).
Setting \(q := (a+b)/2\) yields
\(\langle q, z \rangle \ge 1/(2C_{*})\).
Therefore the spherical cap
\[
\mathcal{C}_{z}
:=
\bigl\{
y \in \mathbb{S}^{|S|-1}
      : \langle y, z \rangle \ge 1/(2C_{*})
\bigr\}
\]
is contained in \(\mathcal{K}_{S}(u)\).

\textit{Cap measure.}
For \(\h \sim \mathcal{N}(0, I_{s_i})\) the direction
\(\h / \twonorm{\h}\) is uniform on \(\mathbb{S}^{s_i-1}\).
Standard surface–measure estimates give
\[
p_{s_i}
:=
\Prob\!\bigl[\mathbb{I}[\X_{S} \h \ge 0] = \mathbb{I}[\X_{S} u \ge 0]\bigr]
\ge
\operatorname{Surf}_{s_i-1}(\mathcal{C}_{z})
\ge
\frac{c^{s_i}}{\bigl(\log(2\n)\bigr)^{s_i}}.
\]

\textit{Coverage probability.} 
As a consequence of the above inequality, the arrangement pattern of each planted neuron is sampled with probability at least $\frac{c^{s_i}}{\bigl(\log(2\n)\bigr)^{s_i}} \geq \left( \frac{c}{\log(2\n)} \right)^k$. 
After $T$ samples, the probability that we have not yet sampled all $\m$ planted neuron activation patterns is at most $\m\left(1-\left( \frac{c}{\log(2\n)} \right)^k\right)^T$; after $T = \left( \frac{\log(2\n)}{c} \right)^k \log\left(\frac{2\m}{\tilde\epsilon} \right)$ random draws we are guaranteed to sample all $\m$ planted patterns with probability at least $1-\frac{\tilde\epsilon}{2}$.

\textit{Packing of generators.}
Consider any two generator vectors $h, h'$ with supports $S, S'$, respectively. We have
\begin{align*}
    h^Th' = \frac{\sum_{i\in S\cap S'} \tilde h_i\tilde h_i'}{\sqrt{\sum_{i\in S}\tilde h_i^2} \sqrt{\sum_{i\in S'}\tilde h_i'^2}} ,
\end{align*}
where $\tilde h_i$ and $\tilde h_i'$ are i.i.d. distributed as $\mathcal{N}(0,1)$.
Using a union bound over asymmetric Hanson-Wright (\Cref{thm:hanson-wright}) in the numerator and symmetric Hanson-Wright (twice) in the denominator, we have
{
\small
\begin{equation*}
    \Prob\left[|h^Th'| \geq \frac{t_1}{\sqrt{|S|+t_2} \sqrt{|S'| + t_3}}\right] \leq 2\exp\left(\frac{-ct_1^2}{|S\cap S'|+t_1} \right) + 2\exp\left(\frac{-ct_2^2}{|S|+t_2} \right) + 2\exp\left(\frac{-ct_3^2}{|S'|+t_3} \right) .
\end{equation*}
}
We choose $t_2 = \gamma|S|$, $t_3 = \gamma|S'|$, yielding
{ \small
\begin{equation*}
    \Prob\left[|h^Th'| \geq \frac{t_1}{(1+\gamma)\sqrt{|S||S'|} }\right] \leq 2\exp\left(\frac{-ct_1^2}{|S\cap S'|+t_1} \right) + 2\exp\left(\frac{-c\gamma^2|S|}{1+\gamma} \right) + 2\exp\left(\frac{-c\gamma^2|S'|}{1+\gamma} \right) .
\end{equation*}
}
Next, we choose $t_1 = \big(1-\frac{\tilde\delta^2}{2}\big)(\gamma+1)\sqrt{|S||S'|}$ to yield
\begin{align*}
    \Prob\left[|h^Th'| \geq 1-\frac{\tilde\delta^2}{2}\right] &\leq 2\exp\left(\frac{-c \big(1-\frac{\tilde\delta^2}{2}\big)^2(\gamma+1)^2|S||S'|}{|S\cap S'|+\big(1-\frac{\tilde\delta^2}{2}\big)(\gamma+1)\sqrt{|S||S'|}} \right) \\
    &~~~~ + 2\exp\left(\frac{-c\gamma^2|S|}{1+\gamma} \right) + 2\exp\left(\frac{-c\gamma^2|S'|}{1+\gamma} \right) .
\end{align*}
Assuming that all planted neurons, and thus all generator vectors we need to consider, have sparsity level $s_i \in [s_{\text{min}}, k]$, we have
\begin{align*}
    \Prob\left[|h^Th'| \geq 1-\frac{\tilde\delta^2}{2}\right] 
    &\leq 2\exp\left(\frac{-c \big(1-\frac{\tilde\delta^2}{2}\big)^2(\gamma+1)^2s_{\text{min}}^2}{k +\big(1-\frac{\tilde\delta^2}{2}\big)(\gamma+1)k} \right) 
    + 4\exp\left(\frac{-c\gamma^2s_{\text{min}}}{1+\gamma} \right)  .
\end{align*}
For sufficiently large $s_{\min{}}$, a union bound over all ${T \choose 2}$ pairs of generators allows us to bound $\Prob\left[|h^Th'| \geq 1-\frac{\tilde\delta^2}{2}\right] \leq \frac{\tilde\epsilon}{2} - m(n^{-10} + e^{-c_1s_\text{min}})$ uniformly over all pairs $h, h'$, as required for an overall failure probability at most $\tilde \epsilon$.
Finally, the Euclidean packing follows as
\begin{align*}
    \twonorm{h - h'}^2 &= \twonorm{h}^2 + \twonorm{h'}^2 + 2h^Th' = 2 + 2h^Th' \geq 2 - (2-\tilde\delta^2) = \tilde\delta^2 .
\end{align*}

Euclidean $\tilde\delta$-packing implies the stated separation in normalized geodesic distance as follows:
\begin{align*}
    d_G(g, g') &= \frac{1}{\pi}\cos^{-1}\left(\frac{g^Tg'}{\twonorm{g}\twonorm{g'}} \right) \\
    &\overset{(a)}{\geq} \frac{1.38}{\pi}(1-g^Tg') \\
    &\overset{(b)}{\geq}  \frac{0.69}{\pi} \tilde\delta^2 ,
\end{align*}
where in (a) we use the fact that $\cos^{-1}(1-x) \geq 1.38x$ for $x \in [0,1]$ and in (b) we use that $\twonorm{g} = \twonorm{g'} = 1$ and $\twonorm{g-g'}^2 = \twonorm{g}^2 + \twonorm{g'}^2 - 2g^Tg' = 2 - 2g^Tg \geq \tilde\delta^2$.
\end{proof}

\subsubsection{Discrete-valued planted neurons}

\begin{theorem}
Fix integers \(d\) and \(k\le d\).
For a subset $S \subseteq [d]$ with entries $S_j$ and $|S| = k$, define the
\emph{generator set}
\[
\mathcal G_{S}
\;:=\;
\Bigl\{
      g(\sigma)\in\{-1,0,1\}^{d}\;:\;
      \sigma\in\{-1,1\}^{k},
      \;g(\sigma)_{S_{j}}=\sigma_{j},\;
      g(\sigma)_{\ell}=0\text{ if }\ell\notin S
\Bigr\},
\]
i.e.\ all possible sign assignments on the coordinates in \(S\) and zeros elsewhere
(\(|\mathcal G_{S}|=2^{k}\)).
Given a matrix \(X\in\mathbb{R}^{n\times d}\) form the associated
\emph{arrangement list}
\[
\Gamma_{S}
\;:=\;
\Bigl\{
      \mathbb I\!\bigl[Xg\ge 0\bigr]\;:\;g\in\mathcal G_{S}
\Bigr\}
\subseteq\{0,1\}^{n}.
\]

\begin{enumerate}
\item[\textup{(i)}] \textbf{Coverage.}\;
      Let \(w_{1},\dots ,w_{m}\in\{-1,0,1\}^{d}\) be
      \(k\)-sparse vectors (each with exactly \(k\) non-zeros).
      Then for every \(i\in\{1,\dots ,m\}\)
      \[
        \mathbb I[Xw_{i}\ge 0]\;\in\;\Gamma_{\text{supp}(w_{i})}.
      \]

\item[\textup{(ii)}] \textbf{Minimum geodesic separation.}\;
      For any two distinct \(k\)-sparse vectors
      \(u,v\in\{-1,0,1\}^{d}\),
      \[
        d_G(u, v) = \frac{1}{\pi}\cos^{-1}\left(\frac{u^Tv}{\twonorm{u}\twonorm{v}} \right) \geq \frac{0.69}{\pi k}.
      \]
\end{enumerate}
\end{theorem}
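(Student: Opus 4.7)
The plan is to dispatch both parts deterministically; no randomness in $X$ is needed, so this is really a structural/combinatorial argument rather than a concentration argument. Part (i) is a definition-chase about $\mathcal{G}_S$, and part (ii) reduces to a discrete lower bound on inner products combined with the elementary $\cos^{-1}$-estimate already used earlier in the appendix for the real-valued case.

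First, for (i), I would unpack the definitions directly. Given a $k$-sparse $w_i \in \{-1,0,1\}^d$, set $S := \text{supp}(w_i)$ and define $\sigma \in \{-1,1\}^k$ by $\sigma_j := (w_i)_{S_j}$. Then $g(\sigma) = w_i$ by construction, so $w_i \in \mathcal{G}_S$, and therefore $\mathbb{I}[Xw_i \geq 0] \in \Gamma_S$ by the very definition of $\Gamma_S$. There is nothing more to check here.

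For (ii), the key structural observation is that every $k$-sparse $u \in \{-1,0,1\}^d$ satisfies $\twonorm{u}^2 = k$, and for any two such vectors $u, v$ the inner product $u^T v = \sum_i u_i v_i$ is an integer. Since each summand lies in $\{-1,0,1\}$ and at most $k$ of them can equal $+1$, we have $u^T v \leq k$, with equality forcing $u = v$ (by Cauchy--Schwarz tightness together with $\twonorm{u} = \twonorm{v} = \sqrt{k}$, or directly from the fact that equality term-by-term forces matching supports and matching signs). Hence for distinct $u \neq v$, $u^T v \leq k - 1$ and so $u^T v / (\twonorm{u}\twonorm{v}) \leq 1 - 1/k$. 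Plugging this into the definition of $d_G$ and invoking the bound $\cos^{-1}(1-x) \geq 1.38\, x$ on $[0,1]$ (the same scalar estimate used earlier in the real-valued proof) with $x = 1/k$ yields $d_G(u, v) \geq 1.38/(\pi k) \geq 0.69/(\pi k)$.

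Given how clean the discrete structure is, there is no serious obstacle here, in contrast to the real-valued analog which required a random-sampling argument and Hanson--Wright concentration. The only fiddly point is selecting a sufficiently sharp scalar inequality for $\cos^{-1}$ near $1$: both $\cos^{-1}(1-x) \geq 1.38\, x$ and the sharper $\cos^{-1}(1-x) \geq \sqrt{2x - x^2}$ deliver a constant strictly larger than $0.69/\pi$, so the stated bound has slack and holds uniformly for every $k \geq 1$ (including the edge cases $k=1,2$ where one can check the claim by hand).
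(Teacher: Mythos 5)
Your proposal is correct. Part (i) is identical to the paper's argument: a pure definition chase identifying $w_i$ with $g(\sigma)$ for $\sigma$ the sign pattern of $w_i$ on its support. Part (ii) follows the same overall template as the paper (bound the cosine similarity away from $1$ by a quantity of order $1/k$ using the discrete structure, then apply $\cos^{-1}(1-x)\ge 1.38x$), but your intermediate step differs: the paper notes that distinct vectors differ in some coordinate by at least $1$, so $\twonorm{u-v}\ge 1$, which via $\twonorm{u-v}^2 = 2k - 2u^Tv$ gives $u^Tv \le k-\tfrac12$ and hence $1 - \tfrac{u^Tv}{k} \ge \tfrac{1}{2k}$; you instead use integrality of $u^Tv$ together with the equality analysis (all $k$ products $u_iv_i=1$ forces $u=v$) to get $u^Tv \le k-1$, hence $1-\tfrac{u^Tv}{k}\ge \tfrac{1}{k}$. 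Your route is slightly sharper, yielding $d_G(u,v)\ge \tfrac{1.38}{\pi k}$, a factor of $2$ better than the stated $\tfrac{0.69}{\pi k}$, at the cost of the short equality-case argument; the paper's route is marginally more mechanical and matches the stated constant exactly. The only implicit step in your write-up is the monotonicity of $\cos^{-1}$ (the cosine similarity may be strictly less than $1-\tfrac1k$, including negative), which is immediate and does not affect correctness.
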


\begin{proof}
\textit{(i) Coverage.}
Fix \(i\in\{1,\dots ,m\}\) and set \(S=\text{supp}(w_{i})\).
Because \(w_{i}\) has entries \(\pm1\) on \(S\) and zeros elsewhere,
there exists \(\sigma\in\{-1,1\}^{|S|}\) such that \(w_{i}=g(\sigma)\in\mathcal G_{S}\).
Hence the pattern \(\mathbf 1[Xw_{i}\ge 0]\) belongs to
\(\Gamma_{S}\).

\medskip
\noindent
\textit{(ii) Separation.}
Let \(u\neq v\) be \(k\)-sparse vectors in \(\{-1,0,1\}^{d}\).
There is an index \(j\) with \(u_{j}\neq v_{j}\),
so \(|u_{j}-v_{j}|\ge 1\).
Therefore
\(
\lVert u-v\rVert_{2}^{2}
=\sum_{\ell=1}^{d}(u_{\ell}-v_{\ell})^{2}
\ge (u_{j}-v_{j})^{2}\ge 1,
\)
implying \(\lVert u-v\rVert_{2}\ge 1\).

From Euclidean separation we can infer normalized geodesic separation as follows:
\begin{align*}
    d_G(u, v) &= \frac{1}{\pi}\cos^{-1}\left(\frac{u^Tv}{\twonorm{u}\twonorm{v}} \right) \\
    &\overset{(a)}{\geq} \frac{1.38}{\pi}\left(1-\frac{u^Tv}{\twonorm{u}\twonorm{v}} \right) \\
    &\overset{(b)}{\geq} \frac{1.38}{\pi}\left(1-\frac{\twonorm{u}^2 + \twonorm{v}^2 - 1}{2\twonorm{u}\twonorm{v}} \right) \\
    &\overset{(c)}{\geq} \frac{0.69}{\pi k}  ,
\end{align*}
where in (a) we use the fact that $\cos^{-1}(1-x) \geq 1.38x$ for $x \in [0,1]$, in (b) we use that $\twonorm{u-v}^2 = \twonorm{u}^2 + \twonorm{v}^2 - 2u^Tv \geq 1$, and in (c) we use that $\twonorm{u}^2 = \twonorm{v}^2 = k$ since both $u$ and $v$ have exactly $k$ nonzero entries each with magnitude one.
\end{proof}

\end{document}